\documentclass[11pt]{article}

\usepackage{float}
\usepackage{array, makecell} %
\usepackage{xcolor,colortbl}
\usepackage{framed,color}
\usepackage{xcolor}
\usepackage{enumitem}
\usepackage{thm-restate}
\usepackage[numbers,sort&compress]{natbib}
\usepackage{booktabs}

\usepackage{amsthm,amsmath,amssymb}
\usepackage[bookmarks=true,hypertexnames=false,pagebackref]{hyperref}
\hypersetup{colorlinks=true, citecolor=blue, linkcolor=red,
  urlcolor=blue}
\usepackage{newpxtext}
\usepackage[libertine,vvarbb]{newtxmath}

\usepackage[T1]{fontenc} % best for Western European language
\usepackage{textcomp} % required to get special symbols

\usepackage[scr=rsfso]{mathalfa}
\usepackage{nicefrac}
\usepackage{cleveref}
\usepackage{graphicx}
\usepackage{aliascnt}
\usepackage[section]{placeins}
\usepackage{tcolorbox}
\usepackage{epigraph}
\usepackage{algorithm}
\usepackage{algpseudocode}
\usepackage{xifthen}
\usepackage{latexsym}
\usepackage{framed}
\usepackage{fullpage}
\usepackage{bm}
\usepackage{braket}
\usepackage{tabularx}

\usepackage{hyperref}
\usepackage{url}

\newtheorem{theorem}{Theorem}

\newtheorem{remark}{Remark}
\newtheorem{proposition}{Proposition}

\newtheorem{lemma}{Lemma}

\newcommand{\eps}{\eta}
\newcommand{\error}{S}
\renewcommand{\vec}[1]{\boldsymbol{#1}}

\newcommand{\KL}{D_{\mathrm{KL}}}
\newcommand{\E}{\mathbb{E}}

\newcommand{\M}{\mathcal{M}}
\newcommand{\X}{\mathcal{X}}
\newcommand{\Hellinger}{\mathrm{Hel}}

\title{Approximate Solomonoff Induction: On the Robustness of Learned Universal Predictors}

\author{Qian Li \\
Shenzhen Research Institute of Big Data \\
\texttt{liqian.ict@gmail.com} \\
\and
Shang-Hua Teng \\
University of Southern California\\
\texttt{shanghua@usc.edu}
}

\date{}

\begin{document}

\maketitle

\begin{abstract}
Why large language models trained via log-loss minimization over massive and diverse datasets exhibit broad capabilities and generalizable skills is not yet fully understood. Grau-Moya et al. (ICML 2024) offer a conceptual explanation through the lens of meta-learning: if sequence-generating environments $\nu$ are drawn from a prior $w_{\nu}$, minimizing population log-loss recovers the corresponding Bayesian mixture predictor $\xi=\sum_\nu w_\nu\nu$. In the idealized limit in which $w$ is a universal prior, the optimal predictor corresponds to Solomonoff universal induction. Practical machine learning algorithms, however, generally produce only \emph{approximately} optimal predictors. This raises a natural question: \emph{Does the prediction performance remain robust under approximation}? In this paper, we first provide a tight bound on the worst-case effect of approximation on prediction error. We show any predictor $\rho$ satisfying $D_{KL}(\xi \|\rho)\leq \eta$ retains bounded cumulative-expected-squared prediction error. However, while the Bayesian mixture predictor $\xi$ has error $O(\log(1/w_\mu))$ in an environment $\mu$, we establish a tight $\Theta(\eta/w_\mu)$ bound on the excess error of $\rho$ relative to $\xi$. Thus, approximation can substantially degrade performance in environments assigned small prior probability. In sharp contrast, when environments are averaged according to prior $w$, we prove that the excess prediction error is only $O(\eta)$. Furthermore, we establish a distribution-shift theorem for dominated shifts from a training prior $w$ to a test prior $w'$, interpolating our average-case and worst-case guarantees. Finally, we extend our robustness analysis to a finite-sampling setting relevant to learning universal predictors with neural networks and Transformers, where $D_{KL}(\xi\|\rho)$ could be unbounded. Our theory extends the classical theory of Bayesian and Solomonoff Induction to approximate predictors and characterizes a broad spectrum of prediction behavior. It helps explain the sharp gap between strong empirical benchmark performance and  poor worst-case  performance in rare environments, while providing quantitative justification for %targeted sampling, 
fine tuning and synthetic data generation 
designed to protect important but underrepresented task families.
\end{abstract}

\section{Introduction}

Large language models (LLMs) trained for \emph{sequential prediction} have acquired a remarkable range of capabilities from vast and diverse data. In recent years, scaling model capacity, expanding the volume and breadth of training data, increasing computational resources \citep{DBLP:journals/corr/abs-2001-08361}, and incorporating real-world human interaction \citep{ouyang2022training} have driven substantial progress, transforming LLMs into increasingly general-purpose systems for %prediction, 
problem solving and induction. Given the substantial and rapidly growing resources required to train frontier models, understanding the benefits and limitations of further scaling has become more important than ever, raising a range of fundamental research questions.

Classical learning frameworks, such as  semi-supervised learning, PAC learning, and reinforcement learning, typically analyze learning relative to a specified task, data distribution, and hypothesis class.
In contrast, a single LLM trained on heterogeneous data can perform a wide range of tasks—including translation, writing, coding, and mathematical reasoning—without %or with only modest 
task-specific retraining, guided only by instructions or a few demonstrations provided in context.
Although pretraining is largely driven by minimizing next-token log loss, the resulting models exhibit capabilities involving generation, abstraction, reasoning, and planning, often without explicit supervision for each capability. Moreover, once trained, such a model can be deployed at a comparatively low marginal cost per inference \citep{deepseekai2025}. LLM training thus distills enormous amounts of data and computation into a reusable neural network whose fixed parameters support adaptive behavior across contexts and tasks. This phenomenon suggests that learning to predict sufficiently rich and heterogeneous sequences can give rise to broad knowledge and generalizable computational abilities. Precisely how such capabilities emerge from a seemingly simple prediction objective, however, remains incompletely understood,
and is central to characterizing the fundamental power and limitations of the current LLM paradigm.

\paragraph{Motivating Prior Work.} \emph{Meta-learning}, or learning to learn, provides a conceptual framework for understanding this phenomenon \citep{entropy2026}. Meta-learning treats the learning procedure itself as an object of optimization, using experience accumulated across tasks to improve the learning of new tasks \citep{meta-learning-survey}. It involves two levels of learning, either explicitly or implicitly: an inner process learns a particular task from a specified dataset and objective, while an outer process optimizes aspects of the inner process--such as the model class, initialization, or optimization rule--with the aim of reducing the data or computation required to solve new tasks.
 
 %In contrast with traditional machine learning where tasks are solved from scratch using a fixed learning algorithm, meta-learning aims at improve the learning algorithm itself, by leveraging the experience across multiple learning episodes  

%One main type of meta-learning is termed \emph{memory-based meta-learning} (MBML): memory-augmented models are trained using a single optimization loop and a diverse dataset containing data from different tasks. The memory is necessary to enable task adaptation at test time. while adaptation to an individual task occurs through its context or internal state, with those parameters held fixed. The two loops are thus implicit and implemented by sampling tasks (outer loop) and data from tasks (inner loop). At test time, there is no inner gradient optimization but automatic in-context learning driven by the memory dynamics. Note that Transformers directly instantiate the memory in the context. 

One approach is \emph{memory-based meta-learning} (MBML) \citep{MetaLearningWithMemoryAugmentedNN}, in which a memory-augmented model--an LLM whose context serves as memory, for example--is trained on data drawn from a distribution over tasks using a single optimization loop. The two levels of learning are realized through different mechanisms: the outer process updates the model’s parameters across training tasks, whereas the inner process adapts to an individual task through changes in the model’s memory state. At test time, the parameters remain fixed, and adaptation proceeds through memory dynamics without task-specific parameter updates. In-context learning thus emerges as a learned adaptation procedure implemented through the model’s forward computation.

%\emph{Meta-learning}, or learning to learn, provides a conceptual framework for understanding this phenomenon: it uses experience across tasks to acquire a learning strategy that enables adaptation to new tasks with less data or computation \citep[see][for a survey]{meta-learning-survey}. In memory-based meta-learning, the model’s parameters are learned across tasks, while adaptation to an individual task occurs through its context or internal state, with those parameters held fixed. 

This MBML framework admits a precise Bayesian characterization \citep{ortega2019meta,DBLP:conf/iclr/XieRL022}. For a fixed horizon $n$, suppose that each training sequence is generated by first sampling a task $\nu$ from a prior distribution $w$, then sampling data $x_{1:n}$ from $\nu$. Suppose a sufficiently expressive model $\rho_{\theta^*}$ is trained to minimize the population log-loss objective
\[
\mathcal L_n(\rho_{\theta})=\mathbb E_{\nu\sim w}\mathbb E_{x\sim\nu} \left[-\sum_{t=1}^n\log_2 \rho_{\theta}(x_t\mid x_{<t})\right]
\]
Then $\rho_{\theta^*}$ behaves as the Bayesian mixture predictor $\xi(x)=\sum_{\nu}w_{\nu}\cdot \nu(x)$. 
In particular,
\[
\rho_{\theta^*}(x_{t}\mid x_{<t})=\xi(x_t\mid x_{<t})=\sum_{\nu}w_\nu(x_{<t})\nu(x_t\mid x_{<t}),
\]
where $w_{\nu}(x_{<t})\propto w_{\nu}\cdot \nu(x_{<t})$ is the posterior probability of $\nu$. 
Thus, conditioned on an observed prefix $x_{<t}$, the Bayesian predictor combines the predictions of the candidate tasks according to their posterior probabilities, reflecting both their prior weights and their consistency with the observations. 
The appeal of the Bayesian perspective lies in its strong prediction guarantee \citep{uai}: at test time, $\rho_{\theta^*}$ can achieve efficient in-context adaptation to any task $\mu$ with positive prior weight $w_{\mu}>0$:
\begin{align*}
    \error_n(\mu,\rho_{\theta^*}):=\sum_{t=1}^n \mathbb{E}_{x_{<t}\sim \mu}\sum_{x_{t}}\left(\mu(x_{t} \mid x_{<t})-\rho_{\theta^*}(x_t\mid x_{<t})\right)^2=O\left(\log \frac{1}{w_\mu}\right).
\end{align*}
%This provides a principled explanation of in-context learning as \emph{amortized Bayesian prediction}: training encodes an adaptive inference procedure into the forward computation of a neural network.
This provides a principled explanation of in-context learning as \emph{amortized Bayesian prediction}: training amortizes learning across tasks by encoding an adaptive inference procedure into the forward computation of a neural network.
In particular, consider the class of all computable environments\footnote{We use the terms ``task'' and ``environment'' interchangeably.} endowed with the universal prior  $w_\nu=\Theta(2^{-K(\nu)})$, where $K(\nu)$ denotes the Kolmogorov complexity of $\nu$. 
The resulting \emph{single} Bayesian predictor becomes Solomonoff universal induction \citep{solomonoff1978complexity,solomonoff1986application}, which achieves cumulative expected squared prediction error $O(K(\mu))$ simultaneously in \emph{every} computable environment $\mu$, independently of the prediction horizon. Building on this connection, \citep{MGH+24} proposed learning increasingly general predictors by meta-training neural networks on synthetic data generated by computationally bounded environments, represented by universal Turing machines with bounded program lengths and runtimes. Progressively relaxing these bounds while increasing model capacity and training data offers an principled approach toward universal prediction, provided that the corresponding prediction objectives can be optimized sufficiently well. The paper \citep{entropy2026} placed this framework within the broader context of \emph{algorithmic compression}: minimizing sequential log loss can be understood as learning to compress observations by approximating and amortizing Bayesian prediction over their underlying generative mechanisms. Under a universal prior, the resulting predictor favors simpler computational explanations while averaging over all computable hypotheses according to both their simplicity and their consistency with the observations.
%In particular, when we consider all computable environments\footnote{We use the terms "task" and "environment'' interchangeably.} with universal prior $w_\nu=2^{-\Theta(K(\nu))}$, where $K(\mu)$ is the Kolmogorov complexity of the environment, the corresponding \emph{single} Bayesian predictor implements the Solomonoff universal induction, and thus enjoys a finite cumulative expected prediction error bound $O(K(\nu))$ simultaneously in \emph{every} computable environment. Based on this connection, \citet{MGH+24} claimed that: though Solomonoff induction is uncomputable, we in principle can approach universal prediction by meta-training larger and larger neural networks solely on more and more synthetic data generated by universal Turing machines. \citet{entropy2026} further placed this approach within a broader account of algorithmic compression, based on the property that Bayesian favors simple computational explanations while retaining every computable explanation consistent with the observations.
%Building on this connection, \citet{MGH+24} proposed approximating this uncomputable ideal by meta-training neural networks on synthetic data generated by universal Turing machines. Their framework offers an in-principle approach toward universal prediction as model capacity and data increase, provided that the corresponding prediction objectives can be optimized sufficiently well. 
\paragraph{Our Main Contributions.} The classical theory \citep{solomonoff1978complexity,solomonoff1986application,uai} focuses on the exact predictor that attains the minimum population log loss. In practice, however, limited model capacity and imperfect optimization--even when training data are abundant--mean that learned models generally only approximate the ideal Bayesian predictor. As emphasized by \citep{entropy2026}, extending the Bayesian view of meta-learning beyond exact realizability and perfect optimization remains an important open direction.  Motivated by this line of work, we take a first step toward understanding the fundamental gap between ideal Bayesian prediction and practical learned predictors by addressing the following central question:
\begin{center}
\textit{{To what extent do the guarantees of the ideal Bayesian predictor continue to hold when practical learning architectures can only approximate it?}}
\end{center}
%\textbf{Shanghua: I am here}

In this work, we develop a theory of Bayesian and universal sequential prediction under approximation, characterizing performance and robustness across a broad range of regimes:

{\textit{Environment-Wise Worst-Case Robustness.}}
We show in Theorem~\ref{thm:upper} that every approximate predictor $\rho$ with excess population log loss at most $\eta$, equivalently $D_{\mathrm{KL}}(\xi\Vert\rho)\leq\eta$, retains a finite bound on its cumulative expected squared prediction error. Specifically, for any environment $\mu$, the error of the ideal Bayesian predictor $\xi$ is $O(\log(1/w_\mu))$, whereas that of $\rho$ is bounded by
$O\left(\log{1}/{w_\mu}+{\eta}/{w_\mu}\right)$.
We further show in Theorem~\ref{lem:lower_universal} that the approximation-dependent term is tight. In particular, for a universal mixture, an error of order $\eta 2^{K(\mu)}$ is attained for infinitely many computable environments $\mu$. 
We also extend the analysis to two other fundamental measures of prediction error: cumulative KL divergence and cumulative squared Hellinger distance.

{\textit{Typical Average-Case Robustness:}}
In sharp contrast, we prove in Theorem~\ref{thm:upper2} that, when performance is averaged over environments according to the prior $w$--the canonical average-case measure--the excess prediction error due to approximation is only $O(\eta)$. We show that an analogous average-case bound holds for cumulative KL divergence but not for cumulative squared Hellinger distance.

{\textit{Distribution-Shift Robustness:}} Furthermore, we establish a distribution-shift theorem (Theorem~\ref{thm:upper3}) for dominated shifts from a training prior $w$ to a test prior $w'$, where $w'_\nu\leq Cw_\nu$ for every environment $\nu$ and some $C\geq 1$. The resulting error bound is $O(\log C+C\eta)$, thereby interpolating between our average-case and worst-case guarantees.

%{\textit{Finite-Data Robustness:}} \warn{Finally, in Section \ref{sec:intro_data}, we study the finite-data setting relevant to learning universal predictors with neural networks and Transformers, and characterize the fundamental impossibility for a trained model $\rho_m$ learned from  $m$ training sequences to achieve a bounded $D_{KL}(\xi\|\rho_m)$.  We identify a weaker alternative measure of approximation, extend our robustness analysis under this measure to the finite-data setting, and show that it may be better suited to addressing the finite-sample nature of practical statistical learning.}

{\textit{Finite-Data Robustness:}}
Finally, in Section~\ref{sec:intro_data}, we study predictors learned from finitely many training sequences. We show exponential upper-tail bounds for population KL error, yet its expectation can remain infinite when unseen events receive zero probability. Nevertheless, by using a weaker alternative measure of approximation, we extend our robustness analysis to this setting. In particular, we bound the expected average excess prediction error by the representation and optimization errors plus a finite-sample term that decays exponentially with the number of training sequences.
%Nevertheless, by using a weaker alternative measure of approximation, we extend the above robustness analysis. In particular, it turns out the finite-data contributes an exponentially decaying finite-sample term to the average excess prediction error, additional to representation and optimization errors.

%\begin{itemize}
%\item  
%\item 
%\item To mitigate worst-case degradation, we propose using task tags during training to enable separate control of task-specific approximation errors, without requiring tags at test time (Section \ref{sec:intro_mitigation}).
%\item 
%\end{itemize} 

Taken together, these results provide a broad characterization of robustness to approximation in sequential prediction. They explain the sharp contrast between strong aggregate benchmark performance and poor worst-case performance on rare environments. This sharp distinction between worst-case and average-case behavior helps explain why approximation may have a much milder effect in practice. It may also shed light on the strong empirical performance of learned universal predictors \citep{MGH+24,entropy2026} and, more broadly, of LLMs across a wide range of tasks and benchmarks.
Our distribution-shift theorem has several implications beyond basic transfer principle for sequential prediction. It provides a quantitative rationale for targeted sampling and synthetic data generation designed to protect important but underrepresented families of tasks. In particular, it provides a theoretical rationale for expecting that augmenting real-world datasets with Universal-Turing-Machine-generated (UTM-generated) data may improve prediction performance on practically important task families, as suggested by~\cite{MGH+24}.

\section{Notations and Preliminaries}\label{sec:prelim}
We assume familiarity with the basic notions in probability theory and algorithmic information theory, including conditional probability and Turing machines. See, for example, \citep{li2008introduction} for background. Throughout this paper, $\log$ denotes the base-$2$ logarithm, while $\ln$ denotes the natural logarithm. Entropy and KL divergence are measured in bits.

\paragraph{Sequence measures and mixtures.}
Let $\X$ be a finite alphabet with $|\X|\geq 2$.
Suppose $\mu$ is a  probability measures on $\X^\infty$ (representing a sequence generative model). 
We use $\mu(x_{1:n})$ to denote the probability that a sequence sampled from $\mu$ begins with $x_{1:n}$, and $\mu_{1:n}$ to denote the induced distribution on $\X^n$. We denote $x_{<t}:=x_{1:t-1}$. Let $\M=\{\nu_1,\nu_2,\ldots\}$ be a countable class of measures. 
Let 
$\{w_i>0\}$ be weights satisfying $\sum_iw_i=1$. 
Then, define the mixture
$\xi:=\sum_{i=1} w_i\nu_i.$
When $\mu=\nu_i$ is a selected measure from $\M$, we write $w_\mu:=w_i$. Note that for
every finite string $x$,
$\xi(x)\geq w_\mu\mu(x)$.

\paragraph{The universal mixture.} 
Solomonoff \cite{solomonoff1978complexity,solomonoff1986application} introduced a universal prior for \textit{sequence prediction}. Unless stated otherwise, we fix a universal monotone Turing machine $U$ throughout the paper. We say a measure $\nu$ is \textit{estimable} if there exists a monotone Turing machine that
%, takes a infinite sequence of i.i.d. fair coin tosses, 
takes an infinite sequence of independent, uniformly random bits and generates a sequence according to $\nu$. Let $K(\nu)$ denote the length of the shortest program for $\nu$ on this machine.

Let $\M_U$ denote the class consisting of all estimable measures. Set
\[
Z:=\sum_{\nu\in\M_U}2^{-K(\nu)},
\qquad
w_\nu:=\frac{2^{-K(\nu)}}{Z},
\qquad
\xi_U:=\sum_{\nu\in\M_U}w_\nu\nu.
\]
The programs are described in a prefix-free fashion, so Kraft's
inequality gives $0<Z\leq1$, and $w_\nu=\Theta(2^{-K(\nu)})$. 
The fair-coin measure belongs to $\M_U$, so $\xi_U(x)>0$ for every finite
string $x$. We also use $K(x)$ for the usual prefix-free Kolmogorov  complexity of a finite
string or integer $x$.

\paragraph{KL divergence.}
For two measures $\mu$ and $\rho$ and $n\geq1$, define
\[
D_{\mathrm{KL}}^n(\mu\Vert\rho)
:=\sum_{x_{1:n}\in\mathcal X^n}
\mu(x_{1:n})\log\frac{\mu(x_{1:n})}{\rho(x_{1:n})},
\qquad
D_{\mathrm{KL}}(\mu\Vert\rho)
:=\lim_{n\rightarrow\infty}D_{\mathrm{KL}}^n(\mu\Vert\rho).
\]
We use the convention that $0\log(0/q)=0$ and $p\log(p/0)=+\infty$ for $p>0$. The finite-horizon
divergences are nondecreasing in $n$, so the supremum is also their limit.  The chain rule gives that:
\begin{equation}\label{eq:chainrule}
D_{\mathrm{KL}}^n(\mu\Vert\rho)
=\sum_{t=1}^n\E_{x_{<t}\sim\mu}
\left[
D_{\mathrm{KL}}\!\left(
\mu(\,\cdot\mid x_{<t})
\Vert
\rho(\,\cdot\mid x_{<t})
\right)
\right].
\end{equation}

\paragraph{Cumulative prediction error.}
As in \citep{solomonoff1978complexity}, we use \emph{cumulative squared prediction error} as our primary criterion for evaluating the prediction performance of $\rho$:
%We use \emph{cumulative squared prediction error} as our primary criterion for evaluating the prediction performance of $\rho$:
\[
\error_n(\mu,\rho)
:=
\sum_{t=1}^n
\mathbb{E}_{x_{<t}\sim\mu}
\sum_{a\in\X}
\left(
\mu(a\mid x_{<t})-\rho(a\mid x_{<t})
\right)^2,
\qquad
\error(\mu,\rho):=\lim_{n\rightarrow\infty}\error_n(\mu,\rho).
\]
To examine the scope of our guarantees across different error
criteria, we also consider \emph{cumulative squared Hellinger error}:
\[
\Hellinger_n(\mu,\rho):=\sum_{t=1}^n\mathbb{E}_{x_{<t}\sim\mu}
\sum_{a\in\X}\left(\sqrt{\mu(a\mid x_{<t})}-\sqrt{\rho(a\mid x_{<t})}\right)^2, \qquad
\Hellinger(\mu,\rho):=\lim_{n\rightarrow\infty}\Hellinger_n(\mu,\rho).
\]
and \emph{cumulative KL error}:
\[
D_n(\mu,\rho)
:=
\sum_{t=1}^n
\mathbb{E}_{x_{<t}\sim\mu}
\left[
D_{\mathrm{KL}}\!\left(
\mu(\cdot\mid x_{<t})
\Vert
\rho(\cdot\mid x_{<t})
\right)
\right]
=
D_{\mathrm{KL}}^n(\mu\|\rho),\qquad
D(\mu,\rho):=\lim_{n\rightarrow\infty} D_n(\mu,\rho).
\]
The following inequalities relate these error criteria.

\begin{lemma}[Theorem 3.2.3 in \citep{uai2}]
\label{lem:pinsker}
Let $\mu$ and $\rho$ be probability measures on $\X^\infty$.
For every $n\geq1$,
\begin{align*}
\error_n(\mu,\rho)\leq (\ln 2)\,D_n(\mu\Vert\rho),\qquad \Hellinger_n(\mu,\rho)\leq (\ln 2)\,D_n(\mu\Vert\rho),\qquad
\error_n(\mu,\rho)\leq 2\,\Hellinger_n(\mu,\rho).
\end{align*}
The same inequalities hold for the corresponding infinite-horizon
errors $\error$, $\Hellinger$, and $D$.
\end{lemma}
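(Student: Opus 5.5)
All three inequalities reduce, via the chain rule \eqref{eq:chainrule}, to one-step inequalities between two distributions $p=\mu(\cdot\mid x_{<t})$ and $q=\rho(\cdot\mid x_{<t})$ on the finite alphabet $\X$. Each cumulative error is a sum over $t$ of $\E_{x_{<t}\sim\mu}$ of a per-step quantity. It therefore suffices to prove, for all distributions $p,q$ on $\X$,
\[
\sum_a (p_a-q_a)^2\le (\ln 2) D_{\mathrm{KL}}(p\|q),\quad
\sum_a(\sqrt{p_a}-\sqrt{q_a})^2\le (\ln 2) D_{\mathrm{KL}}(p\|q),\quad
\sum_a(p_a-q_a)^2\le 2\sum_a(\sqrt{p_a}-\sqrt{q_a})^2 .
\]
We then take expectations and sum over $t\le n$. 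The factor $\ln 2$ appears because $D_{\mathrm{KL}}$ is measured in bits, so $(\ln 2)D_{\mathrm{KL}}$ is the divergence in nats. If $D_{\mathrm{KL}}(p\|q)=\infty$ there is nothing to prove.

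\textbf{Hellinger versus KL.} In nats we write
\[
D=\sum_a p_a\ln(p_a/q_a)=-2\sum_a p_a\ln\sqrt{q_a/p_a}\ge -2\sum_a p_a(\sqrt{q_a/p_a}-1),
\]
using $\ln x\le x-1$ and summing over $a$ with $p_a>0$. The right-hand side equals $2-2\sum_a\sqrt{p_aq_a}$. Since $\sum_a(\sqrt{p_a}-\sqrt{q_a})^2=2-2\sum_a\sqrt{p_aq_a}$ when both $p$ and $q$ sum to $1$, this gives the Hellinger bound. It even gives the slack $\sum_{p_a=0}q_a$, which we do not need. This step is routine.

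\textbf{Squared versus Hellinger.} We write $(p_a-q_a)^2=(\sqrt{p_a}-\sqrt{q_a})^2(\sqrt{p_a}+\sqrt{q_a})^2$. Then
\[
\sum_a(p_a-q_a)^2\le \max_a(\sqrt{p_a}+\sqrt{q_a})^2\,\sum_a(\sqrt{p_a}-\sqrt{q_a})^2 .
\]
The trivial bound $(\sqrt{p_a}+\sqrt{q_a})^2\le 4$ only gives the constant $4$. To get $2$, we use $(\sqrt{p_a}+\sqrt{q_a})^2\le 2(p_a+q_a)\le 2$, which holds because $p_a,q_a\le 1$ and $p_a+q_a\le 2$. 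This still allows $p_a+q_a$ up to $2$, which looks wasteful, but it suffices because
\[
\sum_a(\sqrt{p_a}-\sqrt{q_a})^2(\sqrt{p_a}+\sqrt{q_a})^2\le 2\sum_a(p_a+q_a)\cdot\bigl(\ldots\bigr)
\]
and each factor satisfies $2(p_a+q_a)\le 2\cdot\max(\ldots)$. Concretely, $(\sqrt{p_a}+\sqrt{q_a})^2\le 2(p_a+q_a)$, and $p_a+q_a\le 1$ fails only when both are large. I expect this constant to be the main obstacle. Per letter, $(\sqrt{p}+\sqrt{q})^2\le 2$ holds iff $\sqrt{pq}\le (2-p-q)/2$, which can fail, for example at $p=q=0.9$ in a large alphabet. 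So a pointwise argument may not be enough, and I would instead use the summed structure. Here $\sum_a (p_a-q_a)^2\le \bigl(\sum_a|p_a-q_a|\bigr)\max_a|p_a-q_a|$, and at most one letter can have $p_a,q_a>1/2$, so that letter can be handled separately using $\sum_a(p_a-q_a)=0$. If this case analysis becomes messy, I would simply cite Theorem 3.2.3 of \citep{uai2}, as the statement does.

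\textbf{Squared versus KL and infinite horizon.} The squared-error bound against KL follows from the one-step Pinsker inequality $\sum_a(p_a-q_a)^2\le \|p-q\|_1^2\cdot\frac12\le D_{\text{nats}}$. The infinite-horizon statements follow by letting $n\to\infty$ in the finite-$n$ inequalities, since all three sequences are nondecreasing in $n$.
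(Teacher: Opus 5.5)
Your reduction to one-step inequalities via the chain rule is correct, and so are three further pieces: the Hellinger--KL bound via $\ln x\le x-1$, the Pinsker route for $\error_n\le(\ln 2)D_n$, and the passage to $n\to\infty$. The paper itself gives no argument, only the citation of Theorem 3.2.3, so that part is a genuine self-contained derivation.

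\textbf{The gap is the third inequality}, $\error_n\le 2\,\Hellinger_n$. Your justification, ``$2(p_a+q_a)\le 2$, which holds because $p_a,q_a\le 1$ and $p_a+q_a\le 2$'', is false. It needs $p_a+q_a\le 1$, which fails for a heavy letter, as your own example $p_a=q_a=0.9$ shows. The sentence that follows (with the ellipses) is not an argument, and the separate treatment of the heavy letter is never carried out. Since you then fall back on the citation, this inequality is not established by your proposal.

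The missing idea is already sitting in your Pinsker step. Because $\sum_a(p_a-q_a)=0$, the positive and negative parts of $p-q$ each have total mass $\tfrac12\|p-q\|_1$. Hence $\max_a|p_a-q_a|\le\tfrac12\|p-q\|_1$, and so $\sum_a(p_a-q_a)^2\le\tfrac12\|p-q\|_1^2$. You use this last inequality without justification in the KL step; this is its justification. Now apply Cauchy--Schwarz:
\begin{align*}
\|p-q\|_1^2
&=\Bigl(\sum_a|\sqrt{p_a}-\sqrt{q_a}|\,(\sqrt{p_a}+\sqrt{q_a})\Bigr)^2\\
&\le\sum_a(\sqrt{p_a}-\sqrt{q_a})^2\cdot\sum_a(\sqrt{p_a}+\sqrt{q_a})^2
\le 4\sum_a(\sqrt{p_a}-\sqrt{q_a})^2,
\end{align*}
since $\sum_a(\sqrt{p_a}+\sqrt{q_a})^2=2+2\sum_a\sqrt{p_aq_a}\le 4$. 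Combining the two bounds gives $\sum_a(p_a-q_a)^2\le 2\sum_a(\sqrt{p_a}-\sqrt{q_a})^2$ with no per-letter case analysis. The heavy letter is controlled globally through $\sum_a(p_a-q_a)=0$, as you suspected, but via the $\ell_1$ norm rather than letter by letter. Taking $\E_{x_{<t}\sim\mu}$ and summing over $t\le n$ then gives $\error_n\le 2\,\Hellinger_n$, and the infinite-horizon version follows as in your last step.
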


\section{Robustness under Approximation: Task-wise Guarantee}\label{sec:results}
%We study predictor $\rho$ that are approximately optimal, as measured by the excess population log-loss. We study how approximation in the population log-loss affects prediction performance. 
In this section, we establish a tight task-wise prediction guarantee for a trained model $\rho$
that approximates the Bayesian predictor  $\xi$ with KL gap $\KL(\xi\|\rho)\leq\eta$. 
The KL divergence provides the most natural measure of approximation quality because the 
population log-loss of $\rho$ at horizon $n$ admits the following decomposition, where $H$ denotes the \textit{Shannon entropy}:
\[
\mathcal{L}_n(\rho)
:=
\mathbb{E}_{x_{1:n}\sim\xi}
\left[-\log\rho(x_{1:n})\right]
=
H(\xi_{1:n})+D_{\mathrm{KL}}^n(\xi\Vert\rho),
\]
Moreover, $\xi$ minimizes the population loss over all probability measures on $\X^{\infty}$, attaining the minimum value $H(\xi_{1:n})$. Thus,
$\KL(\xi\|\rho)\leq\eta \ \Longrightarrow \ D_{\mathrm{KL}}^n(\xi\Vert\rho)\leq\eta
\ \Longleftrightarrow\ 
\mathcal{L}_n(\rho) 
\leq
\min_{\pi}\mathcal{L}_n(\pi)+\eta.
$

%The proofs are provided in Appendices A–C. 

\subsection{Environment-Wise Guarantee: Upper Bound}\label{sec:good}

We first extend the classical analysis of \citep{uai} by showing that, under a finite KL-divergence, the cumulative squared prediction error remains finite for every environment in $\M$.
%We show that such an approximation still enjoys finite cumulative prediction error.
\begin{theorem}[Environment-wise upper bound]\label{thm:upper}
If $\KL(\xi\|\rho)\leq\eps$, then for every $\mu\in\mathcal M$,
\[
\error(\mu,\rho)\leq\frac{e\ln 2}{e-1}\left(\log\frac{1}{w_\mu}+\frac{\eps}{w_\mu}\right).
\]
\end{theorem}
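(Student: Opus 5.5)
The plan is to prove the finite-horizon bound for each $n$, with a constant independent of $n$, and then let $n\to\infty$. The naive route is $\error_n(\mu,\rho)\le(\ln 2)D_n(\mu\Vert\rho)$ from Lemma~\ref{lem:pinsker}, together with the split
\[
\log\tfrac{\mu}{\rho}=\log\tfrac{\mu}{\xi}+\log\tfrac{\xi}{\rho}.
\]
The first term is fine, since $\E_\mu\log(\mu/\xi)\le\log(1/w_\mu)$ by $\xi\ge w_\mu\mu$. The naive route fails on the second term: $D_n(\mu\Vert\rho)$ may be infinite, because $\rho$ can put essentially zero mass on strings that $\xi$ (hence $\mu$) hardly uses. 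So I will not bound the KL divergence of $\rho$ itself. Instead I will bound the error of a modified comparison measure and transfer it back to $\rho$.

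\textbf{Step 1 (pointwise nonnegative form of the KL gap).} Write
\[
(\ln 2)\,D^n_{\mathrm{KL}}(\xi\Vert\rho)=\sum_{x_{1:n}}\big[\xi\ln(\xi/\rho)-\xi+\rho\big].
\]
Each summand $g(\xi,\rho)=\xi\ln(\xi/\rho)-\xi+\rho$ is nonnegative pointwise, and the $-\xi+\rho$ terms sum to zero. Because $g\ge0$ and $\mu\le\xi/w_\mu$, this gives
\[
\sum_{x_{1:n}}\mu(x_{1:n})\Big[\ln\tfrac{\xi}{\rho}-1+\tfrac{\rho}{\xi}\Big]\le \frac{\eta\ln2}{w_\mu}.
\]
This is the key transfer from the $\xi$-average to the $\mu$-average at the cost of a factor $1/w_\mu$. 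It is legitimate only because the integrand has been made nonnegative.

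\textbf{Step 2 (controlling the leftover term).} Step~1 bounds $\E_\mu[\ln(\xi/\rho)]$ up to the extra term $\E_\mu[1-\rho/\xi]\le 1$. I expect this additive constant to be absorbed by a rescaling argument, and this is where the factor $e/(e-1)$ should appear.

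Concretely, I would introduce a scaling parameter $\lambda$, either by applying the argument to $\lambda$-mixtures, or by splitting strings into the set where $\rho\ge\xi/e$ and its complement. On the complement, $g(\xi,\rho)\ge c\,\xi$ for an explicit constant $c$, so its $\mu$-mass is at most $\eta/(c\,w_\mu)$.

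The inequality $\ln y-1+1/y\ge(1-1/e)\ln y$ holds for $y$ in the relevant range, which contains $y=e$. Using it gives
\[
\E_\mu\ln(\xi/\rho)\le \tfrac{e}{e-1}\cdot\tfrac{\eta\ln2}{w_\mu}
\]
up to a controlled bad set. Adding the bound $\E_\mu\log(\mu/\xi)\le\log(1/w_\mu)$ then yields the claimed form.

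\textbf{Step 3 (from log-ratio to squared error).} Per-step squared error is bounded by Hellinger distance (Lemma~\ref{lem:pinsker}). Summed Hellinger distances are controlled by the Rényi-$\tfrac12$ quantity $-2\ln\sum_{x_{1:n}}\sqrt{\mu\rho}$, which is always finite, unlike the KL divergence. By Jensen, $-\ln\sum\sqrt{\mu\rho}\le\tfrac12\E_\mu[\ln(\mu/\xi)+\ln(\xi/\rho)]$ restricted suitably. Combined with Steps~1--2, this should give
\[
\error_n(\mu,\rho)\le\tfrac{e\ln2}{e-1}\big(\log\tfrac{1}{w_\mu}+\tfrac{\eta}{w_\mu}\big)
\]
uniformly in $n$. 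The bound then passes to $\error(\mu,\rho)$ by monotone convergence.

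The main obstacle is Step~2: getting the sharp constant $e/(e-1)$ rather than merely some constant. This requires choosing the threshold or scaling exactly so that the elementary inequality $\ln y-1+1/y\ge(1-1/e)\ln y$ is applied on the correct range, while the bad set contributes only $O(\eta/w_\mu)$. My first attempt would be to optimize the cutoff $y=\xi/\rho$ at $e$ and check that the two error contributions combine with that common factor.
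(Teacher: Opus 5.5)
Your Step~1 is correct. Step~2, where the whole difficulty lies, is not just unfinished: the mechanism you describe cannot work. You plan to bound $\E_\mu\log(\xi/\rho)$ by $\frac{e}{e-1}\frac{\eta}{w_\mu}$ plus an $O(\eta/w_\mu)$ bad-set term, and then add $\E_\mu\log(\mu/\xi)\le\log\frac1{w_\mu}$. That would give $D_n(\mu\Vert\rho)\le\log\frac1{w_\mu}+C\frac{\eta}{w_\mu}$, which fails for every constant $C$.

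Take $\M=\{\mu,\nu\}$ with $\mu=\delta_{0^\infty}$, $\nu=\delta_{1^\infty}$, $w_\mu=w\in(0,1)$ fixed, and $\rho=(w-\delta)\mu+(1-w+\delta)\nu$. As $\delta\to0$ we get $\eta=D_{\mathrm{KL}}(\xi\Vert\rho)=d_{\mathrm B}(w\Vert w-\delta)=\Theta(\delta^2)$. However, $\E_\mu\log(\xi/\rho)=\log\frac{w}{w-\delta}=\Theta(\delta)=\Theta(\sqrt{\eta})$. The leftover $\E_\mu[1-\rho/\xi]=\delta/w$ from Step~1 is genuinely first order in $\delta$. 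So the slack cannot be charged to $\eta/w_\mu$ alone; it must be paid jointly by $\log\frac1{w_\mu}$ and $\eta/w_\mu$. This is why the factor $\frac{e}{e-1}>1$ must also multiply the logarithmic term. The elementary inequality you invoke also fails at the very point you single out: at $y=e$, $\ln y-1+1/y=1/e<1-1/e$.

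Step~1 can be rescued by rescaling. Use $\xi\ln\frac{\xi}{c\rho}-\xi+c\rho\ge0$ pointwise, weight by $\mu/\xi\le1/w_\mu$, and choose $c=\frac{1-w_\mu}{1-r}$ with $r:=w_\mu\sum_{x}\mu(x)\rho(x)/\xi(x)$. This yields
\[
D_n(\mu\Vert\rho)\le\log\frac1{w_\mu}+\frac{\eta}{w_\mu}+\frac{1-w_\mu}{w_\mu}\log\frac{1-r}{1-w_\mu}.
\]
After expanding $d_{\mathrm B}$, this is the paper's intermediate bound \eqref{eq:pre-transfer} from Lemma~\ref{lem:transfer}. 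The paper obtains it via an auxiliary variable $Z$ with $\Pr[Z=1\mid x]=w_\mu\mu(x)/\xi(x)$ and the chain rule. To close the argument you still need two ingredients absent from your plan:
\begin{enumerate}
\item the data-processing bound $d_{\mathrm B}(w_\mu\Vert r)\le\eta$ through that Bernoulli channel;
\item the scalar inequality $-\log r\le\frac{e}{e-1}\bigl(\log\frac1w+\frac1w d_{\mathrm B}(w\Vert r)\bigr)$, which is the real source of $e/(e-1)$.
\end{enumerate}

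Separately, your premise that $D_n(\mu\Vert\rho)$ may be infinite is false. The bound $\eta<\infty$ forces $\rho>0$ on $\operatorname{supp}\xi_{1:n}\supseteq\operatorname{supp}\mu_{1:n}$, and $\X^n$ is finite. So Step~3 is unnecessary, and its Jensen step needs a bound on $D_n(\mu\Vert\rho)$ anyway. The paper simply applies $\error_n\le(\ln2)D_n$ to the KL bound. Routing through $\error_n\le2\Hellinger_n$ instead would cost a factor of $2$ in the stated constant.
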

The proof is given in Appendix \ref{Appendix:ProofTaskWiseUpperBound}, which 
bounds the cumulative KL error
for every $n$ by
\[
D_n(\mu\Vert\rho)
\leq
\frac{e}{e-1}
\left(
\log\frac{1}{w_\mu}+\frac{\eps}{w_\mu}
\right).
\]
Theorem \ref{thm:upper} then follows from Lemma~\ref{lem:pinsker} that 
$\error_n(\mu,\rho)\leq (\ln 2)\,D_n(\mu\Vert\rho)$ and $\Hellinger_n(\mu,\rho)\leq (\ln 2)\,D_n(\mu\Vert\rho)$.
We thus also prove that, up to constant factors, this upper bound extends the cumulative squared Hellinger distance $\Hellinger(\mu,\rho)$. 

Specializing to the universal mixture $\xi_U$ and using the relation $w_\mu=\Theta(2^{-K(\mu)})$, we obtain the following environment-wise prediction-error bound in terms of the Kolmogorov complexity $K(\mu)$.

\begin{theorem}[Upper bound for universal prediction]\label{cor:universal-upper}
 If $\KL(\xi_U\|\rho)\leq \eps$, then for every $\mu\in\M_U$,
\[
\error(\mu,\rho)=O\left(K(\mu)+\eps\cdot 2^{K(\mu)}\right),
\] 
where the implicit constant depends only on the fixed universal Turing machine.
\end{theorem}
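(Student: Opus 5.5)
This is a direct specialization of Theorem~\ref{thm:upper} to the class $\M=\M_U$ with prior $w_\nu=2^{-K(\nu)}/Z$ and mixture $\xi=\xi_U$. The plan has three steps: check the hypotheses, translate $w_\mu$ into $K(\mu)$, and substitute.

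First, I check that the setting of Section~\ref{sec:prelim} applies. The class $\M_U$ is countable, since programs are finite strings. Every weight is strictly positive. The weights sum to one by the definition of $Z$, and Kraft's inequality guarantees $0<Z\leq 1$. Hence $\xi_U$ is exactly a mixture of the form covered by Theorem~\ref{thm:upper}, and the hypothesis $\KL(\xi_U\|\rho)\leq\eps$ is the hypothesis of that theorem.

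Next, for any $\mu\in\M_U$, I convert the two quantities $\log(1/w_\mu)$ and $1/w_\mu$ into expressions in $K(\mu)$. We have $\log(1/w_\mu)=K(\mu)+\log Z\leq K(\mu)$, because $Z\leq 1$. We also have $1/w_\mu=Z\,2^{K(\mu)}\leq 2^{K(\mu)}$. The only point needing care is that $Z$ is a fixed positive constant depending on the universal machine $U$ alone, and not on $\mu$ or $\eps$. In fact, since $Z\le1$, both inequalities hold without any constant loss.

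Finally, I substitute these bounds into Theorem~\ref{thm:upper}. This gives
\[
\error(\mu,\rho)\leq\frac{e\ln 2}{e-1}\left(K(\mu)+\eps\, 2^{K(\mu)}\right),
\]
which is the claimed $O\!\left(K(\mu)+\eps\cdot 2^{K(\mu)}\right)$. The implicit constant is absolute here. The machine dependence mentioned in the statement enters only through how $K$ is defined relative to $U$. I do not expect a genuine obstacle: the only subtlety is confirming that the normalization $Z$ goes in the favorable direction, and it does, since $Z\le 1$.
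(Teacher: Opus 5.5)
Your proposal is correct and follows the paper's route: specialize Theorem~\ref{thm:upper} to $\M_U$ and substitute $w_\mu=\Theta(2^{-K(\mu)})$. Your remark that $Z\le 1$ gives $w_\mu\ge 2^{-K(\mu)}$ is a harmless sharpening of the stated result: it makes the constant $\frac{e\ln 2}{e-1}$ absolute.
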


%Consequently, Corollary~\ref{cor:universal-upper} also holds with $\error$ replaced by $D$ or $\Hellinger$.

\subsection{Environment-Wise Guarantee: Tight Worst-Case Lower Bound}\label{sec:bad}

Our next theorem shows that the approximation-dependent term $\eps/w_{\mu}$  in Theorem~\ref{thm:upper} is unavoidable and tight up to a constant factor. Remarkably, for the universal mixture, this tightness holds for infinitely many environments in $\M_{U}$. Throughout this subsection, we restrict attention to the binary alphabet $\X=\{0,1\}$.
 % the following result matches the upper bound in \warn{Theorem~\ref{cor:universal-upper}}, up to constant factors,
\begin{theorem}[Tight lower bound for universal prediction]
\label{lem:lower_universal}
For every $\eps>0$, there are infinitely many estimable measures
$\mu$ such that
\[
\sup_{\rho:\,D_{\mathrm{KL}}(\xi_U\Vert\rho)\leq\eps}\error(\mu,\rho)\geq\Omega\left(K(\mu)+\eps\,2^{K(\mu)}\right),
\]
where the implicit constant depends only on the fixed universal Turing machine.
\end{theorem}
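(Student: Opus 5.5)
We must exhibit infinitely many estimable $\mu$ together with, for each, a perturbed predictor $\rho$ satisfying $D_{\mathrm{KL}}(\xi_U\Vert\rho)\leq\eps$ and $\error(\mu,\rho)\geq\Omega(K(\mu)+\eps 2^{K(\mu)})$. The plan is to handle the two terms separately and then combine them, since $\max(a,b)\geq (a+b)/2$. We will take $\mu$ to be deterministic or near-deterministic measures on $\{0,1\}^\infty$ of varying complexity.

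\textbf{The $K(\mu)$ term.} This should already hold for $\rho=\xi_U$ itself, which trivially satisfies the KL constraint. We use the classical fact that $\xi_U$ must pay $\Omega(K(\mu))$ squared error on infinitely many environments. Concretely, take $\mu$ to be the Bernoulli-free deterministic sequence $\mu_x$ that outputs a string $x$ of length $k$ and then zeros. Choose $x$ incompressible, with $K(x)\geq k$, so that $K(\mu_x)=k+O(\log k)$, or choose a padded variant so that $K(\mu)=\Theta(k)$. A counting argument then applies: if $\xi_U$ predicted every bit of $x$ with total squared error $o(k)$, then arithmetic coding with $\xi_U$ would compress $x$ below $k$ bits. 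This uses $-\log\xi_U(x)\geq K(x)-O(1)$, together with the fact that small squared error at each step forces $\xi_U(x_t\mid x_{<t})$ near $1$. Translating code length into squared error needs care, since $-\log p$ is not controlled by $(1-p)^2$ near $p=1$. The fix is to use the at most $O(1)$ bits lost per step where $p\geq 1/2$, plus a per-step lower bound $(1-p)^2\geq 1/4$ otherwise. This yields $\Omega(k)$ steps with error $\geq 1/4$ unless the code length is $O(\#\text{bad steps})+\sum_t\log(1/p_t)$ with small $p_t$ deficits. We may simply borrow the known lower-bound construction from \citep{uai2}.

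\textbf{The $\eps 2^{K(\mu)}$ term.} This is the main construction. Fix $\mu$, a deterministic computable sequence $y=y_1y_2\cdots$. Define $\rho$ to agree with $\xi_U$ except on contexts that are prefixes of $y$: there, for $t\leq N$, set
\[
\rho(\,\cdot\mid y_{<t}):=(1-\delta)\,\xi_U(\,\cdot\mid y_{<t})+\delta\,\mathbf 1[\bar y_t],
\]
so that $\rho$ pushes mass toward the wrong bit. The total perturbation is charged to the KL only through the $\xi_U$-probability of reaching those contexts. Using the convexity bound $D_{\mathrm{KL}}(p\Vert(1-\delta)p+\delta q)\leq\log\frac1{1-\delta}$, we get
\[
D_{\mathrm{KL}}(\xi_U\Vert\rho)\leq\sum_{t\leq N}\xi_U(y_{<t})\log\tfrac{1}{1-\delta}.
\]
Meanwhile, since $\xi_U(y_t\mid y_{<t})\to1$, the error satisfies $\error(\mu,\rho)\gtrsim\sum_{t\leq N}\delta^2\approx N\delta^2$.

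Take $\delta$ a constant, say $1/2$. We need $\sum_{t\leq N}\xi_U(y_{<t})\lesssim\eps$, while $N\approx\eps 2^{K(\mu)}$. Here $\xi_U(y_{<t})\geq w_\mu$ for all $t$, and the point is that this lower bound is nearly attained. We want a $\mu$ whose prefixes beyond some point have $\xi_U(y_{<t})=O(2^{-K(\mu)})$, so that summing over $N=\Theta(\eps 2^{K(\mu)})$ steps gives $O(\eps)$.

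The key choice is therefore to start the perturbation only after an initial segment, i.e.\ over $t\in(T,T+N]$. We must pick $y$ so that already $\xi_U(y_{1:T})\leq c\,2^{-K(\mu)}$, which means the prefix $y_{1:T}$ has complexity $\approx K(\mu)$. For example, let $y=x0^\infty$ with $x$ incompressible of length $k$, so that $K(\mu)=k+O(\log k)$ and $\xi_U(x)\approx 2^{-K(x)}\approx 2^{-k}$. We must handle the logarithmic slack, either by choosing $x$ with $K(x)\geq K(\mu_x)-O(1)$ or by using a prefix-free self-delimiting encoding of $x$. The perturbation then acts on the zero tail after $x$, with $N=\lfloor \eps/(c\,2^{-k})\rfloor$ steps. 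There, $\xi_U(0\mid x0^{j})\to1$, so each step contributes error $\geq\delta^2/2$ once $j$ is large; we shift the window to start where the predictive probability exceeds $3/4$. Varying $k$ gives infinitely many $\mu$.

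\textbf{Main obstacle.} The hard step is the precise comparison $\xi_U(x0^j)=O(2^{-K(\mu_x)})$ uniformly in $j$, i.e.\ showing the complexity of the environment matches the monotone complexity of its long prefixes up to an additive constant rather than $O(\log k)$. An $O(\log k)$ loss would weaken $\eps 2^{K(\mu)}$ by a polynomial factor. We will first try to avoid this by choosing $x$ to be a string for which monotone and prefix complexities coincide up to a constant, such as a prefix of Chaitin's $\Omega$ encoded self-delimitingly. If that fails, we will pick, among the $2^k$ environments $\mu_x$, one maximizing $K(\mu_x)$ relative to $-\log\xi_U(x0^{j})$, via a Kraft counting argument over $x$.
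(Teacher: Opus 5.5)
Your perturbation step is correct and is essentially the paper's rare-prefix lemma. Mixing toward the wrong bit with $\delta=1/2$ costs at most one bit of conditional KL per modified context, exactly as replacing the conditional by $\operatorname{Bern}(1/2)$ does, and each such step adds at least $2\delta^2=1/2$ to $\error(\mu,\rho)$.

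The genuine gap is the step you flag as the main obstacle and leave open. You need infinitely many deterministic $\mu$ with a prefix $y$ satisfying $\xi_U(y)\le C_U2^{-K(\mu)}$ for a constant $C_U$, and none of your proposed fixes delivers this.
\begin{itemize}
\item The heuristic $\xi_U(x)\approx2^{-K(x)}$ is false for unstructured strings. The fair-coin measure $\lambda\in\M_U$ forces $\xi_U(x)\ge w_\lambda2^{-k}$, so your choice of $x$ would need $K(\mu_x)\le k+O(1)$. You have no way to guarantee this, and Kraft's inequality over the distinct environments $\mu_x$ shows it fails for most $x$ of each large length.
\item Requiring $K(x)\ge K(\mu_x)-O(1)$ is vacuous, since it always holds.
\item Raw prefixes of Chaitin's $\Omega$ fail too, because $-\log\xi_U(\Omega_{1:n})\le n+O(1)$ while $K(\Omega_{1:n})-n\to\infty$.
\item The Kraft-counting fallback points the wrong way. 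Counting gives lower bounds on complexity, whereas you need an \emph{upper} bound $K(\mu)\le-\log\xi_U(y)+O(1)$, which only a coding theorem supplies.
\end{itemize}

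The missing idea, which your remark about a self-delimiting encoding gestures at, is the coding theorem applied on a computable prefix-free set. Take $\mathcal C=\bigcup_m\{1^m0s:s\in\{0,1\}^m\}$ and, by counting, choose $y_m\in\mathcal C$ with $K(y_m)\ge m$. Since $\mathcal C$ is prefix-free, $P(y):=M_U(y)\mathbf 1[y\in\mathcal C]$ is a lower semicomputable discrete semimeasure of total mass at most $1$. Hence $K(y)\le-\log M_U(y)+O(1)$ on $\mathcal C$. Combine this with $\xi_U=O(M_U)$ and the trivial bound $K(\mu_m)\le K(y_m)+O(1)$ for $\mu_m=\delta_{y_m0^\infty}$; this gives $\xi_U(y_m)=O(2^{-K(\mu_m)})$. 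Finally, $w_{\mu_m}\le\xi_U(y_m)=O(2^{-m})$ gives $K(\mu_m)\to\infty$.

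Three further simplifications are available:
\begin{itemize}
\item Uniformity in $j$ is automatic. Every modified context extends $y$, so monotonicity gives total KL at most $N\xi_U(y)$.
\item The window shift is unnecessary, since $\rho(y_t\mid y_{<t})\le1-\delta$ whatever $\xi_U$ predicts.
\item The separate $K(\mu)$ argument can be dropped. For fixed $\eps$ one has $\eps2^{K(\mu)}\ge K(\mu)$ once $K(\mu)$ is large, with a constant independent of $\eps$.
\end{itemize}
Dropping the last argument is just as well, because your sketch of it has a hole. A step where $\xi_U(x_t\mid x_{<t})$ is tiny contributes many code bits but only $O(1)$ squared error.
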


Thus, even an arbitrarily small KL-approximation gap can coexist with a cumulative squared prediction error that grows exponentially with the Kolmogorov complexity of the environment.

We now describe the high-level idea behind the proof. The complete proof is given in Appendix~\ref{AppendixProofLowerBound} and formalizes the following rare-prefix construction.

\paragraph{Rare-prefix construction.}
The lower bound exploits prefixes that are unlikely under the Bayesian mixture $\xi$, but occur with probability one under a particular environment. For $z\in\{0,1\}^{\infty}$, let $\delta_z$ denote the deterministic environment that outputs $z$. Suppose $\mu=\delta_z$ and that a finite prefix $y$ of $z$ has mixture probability $q:=\xi(y)$. We construct $\rho$ by replacing the next $N$ conditional predictions along $z$ after the prefix $y$ with the uniform distribution, while leaving all other predictions unchanged.

Each modified history has probability at most $q$ under $\xi$, and replacing its next-symbol distribution with the uniform
distribution incurs at most one bit of conditional KL divergence. Hence, the total KL cost is at most $qN$.
Under $\mu$, however, every modified history is visited with probability one, and each modified prediction contributes exactly $1/2$ to the squared prediction error. Taking $N=\lfloor\eps/q\rfloor$ yields the following lemma.

\begin{lemma}[Lower bound from a rare prefix]\label{lem:lower}
Let $\mu=\delta_z\in\mathcal M$, and let $y\prec z$ be a finite prefix of $z$ and $q:=\xi(y)$. For every $\eps>0$, there exists a probability measure $\rho$ with $D_{\mathrm{KL}}(\xi\Vert\rho)\leq\eps$ such that
\[
\error(\mu,\rho)\geq\error_{|y|}(\mu,\xi) + \frac{1}{2} \left\lfloor\frac{\eps}{q}\right\rfloor.
\]
\end{lemma}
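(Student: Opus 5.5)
The plan is to construct $\rho$ explicitly by its conditionals, following the rare-prefix construction. Write $k:=|y|$ and $N:=\lfloor\eps/q\rfloor$. For $t\leq k$ and for all histories $x_{<t}$, set $\rho(\cdot\mid x_{<t}):=\xi(\cdot\mid x_{<t})$. For the $N$ histories $z_{1:k+j}$ with $0\leq j<N$, set $\rho(\cdot\mid z_{1:k+j})$ to be uniform on $\{0,1\}$. At every other history, set $\rho(\cdot\mid x_{<t}):=\xi(\cdot\mid x_{<t})$. These conditionals determine a probability measure on $\X^\infty$. Since $\xi_U(x)>0$ everywhere (or whenever $\xi$ assigns positive mass), the conditionals of $\xi$ are well defined on histories of positive $\xi$-probability. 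Histories of $\xi$-probability zero contribute nothing to either side.

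\textbf{KL bound.} We use the chain rule \eqref{eq:chainrule} for $D^n_{\mathrm{KL}}(\xi\Vert\rho)$. The conditional KL terms vanish at every unmodified history. The modified history $z_{1:k+j}$ is weighted by $\xi(z_{1:k+j})\leq\xi(y)=q$, since $y\preceq z_{1:k+j}$ and $\xi$ is a measure, so cylinder probabilities decrease along extensions. The conditional divergence is at most one bit, because on a binary alphabet
\[
D_{\mathrm{KL}}(p\Vert \mathrm{unif})=1-H(p)\leq 1 .
\]
Hence $D^n_{\mathrm{KL}}(\xi\Vert\rho)\leq qN\leq\eps$ for every $n$, and therefore $D_{\mathrm{KL}}(\xi\Vert\rho)\leq\eps$.

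\textbf{Error bound.} Under $\mu=\delta_z$, the only history with positive probability at time $t$ is $z_{<t}$, and $\mu(\cdot\mid z_{<t})$ is the point mass on $z_t$. Split $\error(\mu,\rho)$, which is a sum of nonnegative terms, into three groups:
\begin{itemize}[leftmargin=*]
\item The steps $t\leq k$ give exactly $\error_{k}(\mu,\xi)$, because $\rho$ agrees with $\xi$ there.
\item The $N$ steps $t=k+1,\dots,k+N$ each contribute $(1-\tfrac12)^2+(0-\tfrac12)^2=\tfrac12$.
\item The remaining steps contribute terms that are $\geq0$ and can be dropped.
\end{itemize}
This gives $\error(\mu,\rho)\geq\error_{|y|}(\mu,\xi)+\tfrac12\lfloor\eps/q\rfloor$.

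\textbf{Main obstacle.} Most of the argument is bookkeeping. The one point needing care is the claim that $\rho$ is a genuine probability measure and that the chain rule applies. Both are routine once $\rho$ is defined through a consistent family of conditional distributions, with arbitrary conditionals at histories of $\xi$-probability zero.
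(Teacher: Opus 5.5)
Your proposal is correct and is essentially the paper's proof. It uses the same construction, with $N=\lfloor\eta/q\rfloor$ uniform conditionals at the histories $z_{1:|y|+j}$, the same chain-rule bound of at most one bit times $\xi(z_{1:|y|+j})\le q$ per modified history, and the same split of $\error(\mu,\rho)$. The one detail you leave implicit is $q=\xi(y)\geq w_\mu\mu(y)=w_\mu>0$, which makes $\lfloor\eta/q\rfloor$ well defined and gives the modified histories positive $\xi$-probability.
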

When $q=O(w_\mu)$, it yields an $\Omega(\eps/w_\mu)$ lower bound whenever $\eps/w_\mu$ is sufficiently large. 
The proof of Theorem~\ref{lem:lower_universal} identifies infinitely many deterministic environments having prefixes
that satisfy this condition. Since $w_\mu=\Theta(2^{-K(\mu)})$, the resulting approximation penalty is $\Omega(\eps\,2^{K(\mu)})$. 
%For each fixed $\eps>0$, this term also dominates $K(\mu)$ at sufficiently large complexities, giving the stated bound.
%\begin{remark}[Other error criteria]
%\label{rem:lower-other-errors}

Finally, we can show that Lemma \ref{lem:lower} also extends to cumulative KL and squared Hellinger errors. Then by Lemma~\ref{lem:pinsker}, we have, for every $n\geq1$,
$D_n(\mu\Vert\rho) \geq {\error_n(\mu,\rho)}/{\ln 2}$ and 
$\Hellinger_n(\mu,\rho)\geq \error_n(\mu,\rho)/2.$
Thus, our construction can establish similar lower bounds as in Theorem~\ref{lem:lower_universal}  for cumulative KL and squared Hellinger errors.
%Indeed, 
%Taking the supremum over $n$ shows that still holds with $\error$ replaced by $D$ or $\Hellinger$. 
%\end{remark}

\section{Robustness under Approximation: Average-Case Analysis}\label{sec:average}

In this section, we show that the environment-wise prediction guarantee
under approximation can be sharply improved when performance is averaged
across environments. We then establish a distribution-shift theorem for
moving from the training prior $w$ to a dominated test prior $w'$. The
result interpolates between our average-case and environment-wise guarantees.
%These constitute the main results of the paper.
Before stating the theorems and discussing their implications, we explain
the motivation for developing an average-case theory of robustness under
approximation. Section~\ref{sec:bad} shows that the approximation-dependent
term $\eps/w_\mu$ in Theorem~\ref{thm:upper} is unavoidable in the worst case.
For an environment with a small prior weight $w_\mu$, this term can overwhelm
the classical $O(\log(1/w_\mu))$ prediction guarantee for the ideal Bayesian
predictor.

Such pessimistic worst-case guarantees, however, stand in contrast to the
strong empirical performance of learned universal predictors
\citep{MGH+24,entropy2026} and, more broadly, to the outstanding performance of
LLMs across a wide range of tasks and benchmarks. 
This contrast raises the
question of why approximation has a milder effect in practice.
%This contrast raises the question of whether approximation has a milder effect under an evaluation criterion that more closely reflects empirical practice.
Empirical evaluations typically aggregate performance across many tasks or environments, whereas an environment-wise guarantee treats each environment individually. Accordingly, we first study prediction performance averaged over the same prior \(w\) that defines the training mixture \(\xi\). We show that, in sharp contrast to the worst-case setting, approximation has a substantially milder effect under this average-case criterion. Our distribution-shift theorem can then be used to extend this guarantee to other test distributions. 

%In this section, we present our results on average-case analysis of approximation robustness of sequential prediction. We consider these results to be the main results of this paper. Because states our theorems, we first discuss the motivation of this study. 

%Section~\ref{sec:bad} shows that the approximation-dependent term $\eps/w_\mu$ in Theorem~\ref{thm:upper} is unavoidablein general. For environments with small prior weights, this term can significantly weaken the $O(\log 1/w_{\mu})$ guarantee of the ideal predictor obtained by exact optimization. Yet empirical studies demonstrate promising performance of learned universal predictors \citep{MGH+24,entropy2026}. More broadly, LLMs have demonstrated strong performance across a wide range of benchmarks. This motivates a closer examination of how these pessimistic worst-case guarantees relate to the strong empirical performance. 

%summary of the results in this section. 

%deepen our understanding; 

%it also provides some insights between the a and w in.}

%Our key observation is that empirical performance is typically assessed by averaging across environments. We therefore first consider evaluation under the same prior $w$ that defines the training mixture $\xi$. It turns out that in sharp contrast to the worst-case setting, approximation has a much milder effect on average-case prediction performance.

\subsection{Sharp Contrast between Typical and Task-Wise Prediction Guarantees}
\begin{theorem}[Approximation robustness: average-case upper bound]\label{thm:AverageCase}
\label{thm:upper2}
If $D_{\mathrm{KL}}(\xi\Vert\rho)\leq\eps$, then
\[
\mathbb{E}_{\mu\sim w}\left[\error_n(\mu,\rho)
-
\error_n(\mu,\xi)\right]
\leq
(\ln 2)\,\eps.
\]
\end{theorem}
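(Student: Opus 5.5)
The plan is to prove an exact identity for the averaged excess error, showing it equals $\error_n(\xi,\rho)$, and then bound that term with Lemma~\ref{lem:pinsker}. Two facts drive it. The mixture's conditional prediction is the posterior average of the environments' predictions. Averaging over $\mu\sim w$ and then $x_{<t}\sim\mu$ is the same as sampling $x_{<t}\sim\xi$ and then drawing $\mu$ from the posterior $w_\mu(x_{<t})\propto w_\mu\mu(x_{<t})$.

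\textbf{Step 1 (change of measure).} For any nonnegative function $f(\mu,x_{<t})$, I would verify
\[
\sum_{\mu} w_\mu \sum_{x_{<t}} \mu(x_{<t}) f(\mu,x_{<t})
=\sum_{x_{<t}} \xi(x_{<t}) \sum_{\mu} w_\mu(x_{<t}) f(\mu,x_{<t}),
\]
using $w_\mu(x_{<t})=w_\mu\mu(x_{<t})/\xi(x_{<t})$. Histories with $\xi(x_{<t})=0$ have $\mu(x_{<t})=0$ for all $\mu$ and contribute nothing. Tonelli justifies the interchange of the countable sums because all terms are nonnegative. I would apply this separately to the two error terms $f=\sum_a(\mu(a|x_{<t})-\rho(a|x_{<t}))^2$ and $f=\sum_a(\mu(a|x_{<t})-\xi(a|x_{<t}))^2$. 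Both are bounded by $2$ per step, so both expectations are finite and subtracting them is legitimate.

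\textbf{Step 2 (bias--variance identity).} Fix $x_{<t}$ with $\xi(x_{<t})>0$. Since $\xi(a|x_{<t})=\sum_\mu w_\mu(x_{<t})\mu(a|x_{<t})$, expanding
\[
(\mu-\rho)^2=(\mu-\xi)^2+2(\mu-\xi)(\xi-\rho)+(\xi-\rho)^2
\]
pointwise in $a$ and averaging over the posterior kills the cross term. This gives
\[
\sum_\mu w_\mu(x_{<t})\sum_a(\mu(a|x_{<t})-\rho(a|x_{<t}))^2
=\sum_\mu w_\mu(x_{<t})\sum_a(\mu(a|x_{<t})-\xi(a|x_{<t}))^2+\sum_a(\xi(a|x_{<t})-\rho(a|x_{<t}))^2 .
\]
Combining with Step 1 and summing over $t\le n$ yields the exact identity
\[
\mathbb E_{\mu\sim w}\bigl[\error_n(\mu,\rho)-\error_n(\mu,\xi)\bigr]=\error_n(\xi,\rho).
\]

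\textbf{Step 3 (conclude).} Apply Lemma~\ref{lem:pinsker} with the measure $\xi$ in place of $\mu$:
\[
\error_n(\xi,\rho)\le(\ln2)\,D_n(\xi\Vert\rho)=(\ln2)\,D^n_{\mathrm{KL}}(\xi\Vert\rho)\le(\ln2)\,\KL(\xi\Vert\rho)\le(\ln 2)\,\eps,
\]
where the middle inequality is the monotonicity of the finite-horizon divergences in $n$. The only delicate point is the measure-theoretic bookkeeping in Step 1: zero-probability histories, and the countable interchange that lets the two averaged errors be subtracted. The algebra in Step 2 is routine once the posterior-mean property of $\xi(\cdot|x_{<t})$ is in place.
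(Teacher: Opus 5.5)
Your proof is correct and follows the paper's route. The paper likewise proves the exact decomposition $\mathbb{E}_{\mu\sim w}\error_n(\mu,\rho)=\mathbb{E}_{\mu\sim w}\error_n(\mu,\xi)+\error_n(\xi,\rho)$ (Proposition~\ref{ExactAverageCaseDecomposition}) from the posterior variance decomposition at each history, reweighted via $\xi(h)w(\mu\mid h)=w_\mu\mu(h)$, and then bounds $\error_n(\xi,\rho)\leq(\ln 2)\,D_n(\xi\Vert\rho)\leq(\ln 2)\,\eps$ using Lemma~\ref{lem:pinsker}; your added bookkeeping (zero-probability histories, Tonelli, finiteness before subtracting) only makes explicit details the paper leaves implicit.
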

This theorem is a consequence of the following exact average-case decomposition:
\begin{proposition}[Exact average-case decomposition]
\label{ExactAverageCaseDecomposition}
For every predictor $\rho$ and every $n\geq1$,
\[
\mathbb{E}_{\mu\sim w}\error_n(\mu,\rho)
=
\mathbb{E}_{\mu\sim w}\error_n(\mu,\xi)
+
\error_n(\xi,\rho).\]
\end{proposition}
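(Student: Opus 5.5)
The plan is to prove the identity one time step and one history at a time, as a bias–variance (Pythagorean) decomposition in which $\xi(\cdot\mid x_{<t})$ plays the role of the posterior mean. Fix $t\le n$ and a history $x_{<t}$ with $\xi(x_{<t})>0$; histories with $\xi(x_{<t})=0$ have $\nu(x_{<t})=0$ for every $\nu$ with $w_\nu>0$, so they contribute nothing to either side. Swapping the countable sum over environments with the finite sum over histories (all terms are nonnegative, so Tonelli applies) gives
\[
\mathbb{E}_{\mu\sim w}\error_n(\mu,\rho)=\sum_{t=1}^n\sum_{x_{<t}}\sum_{\nu}w_\nu\,\nu(x_{<t})\sum_{a\in\X}\bigl(\nu(a\mid x_{<t})-\rho(a\mid x_{<t})\bigr)^2 .
\]
Writing $w_\nu\nu(x_{<t})=\xi(x_{<t})\,w_\nu(x_{<t})$ with the posterior weights $w_\nu(x_{<t})$ turns the inner sum into $\xi(x_{<t})$ times a posterior expectation $\mathbb{E}_{\nu\sim w(\cdot\mid x_{<t})}\sum_a(\nu(a\mid x_{<t})-\rho(a\mid x_{<t}))^2$.

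Next I expand around the posterior mean. For each symbol $a$, write $\nu(a\mid x_{<t})-\rho(a\mid x_{<t})=\bigl(\nu(a\mid x_{<t})-\xi(a\mid x_{<t})\bigr)+\bigl(\xi(a\mid x_{<t})-\rho(a\mid x_{<t})\bigr)$ and square. The cross term vanishes after averaging over the posterior, because $\sum_\nu w_\nu(x_{<t})\nu(a\mid x_{<t})=\xi(a\mid x_{<t})$; this is exactly the predictive identity $\xi(x_t\mid x_{<t})=\sum_\nu w_\nu(x_{<t})\nu(x_t\mid x_{<t})$ recalled in the introduction. The same identity, with $\rho$ replaced by $\xi$, shows that the first squared term is precisely the integrand of $\mathbb{E}_{\mu\sim w}\error_n(\mu,\xi)$. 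Multiplying the second squared term by $\xi(x_{<t})$ and summing over $x_{<t}$ gives $\mathbb{E}_{x_{<t}\sim\xi}\sum_a(\xi(a\mid x_{<t})-\rho(a\mid x_{<t}))^2$. Summing over $t$ then yields $\error_n(\xi,\rho)$, which is the claimed decomposition.

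The main obstacle is only bookkeeping: justifying the interchange of the infinite sum over $\nu$ with the other sums, and handling the cross term. Everything is bounded (each squared term is at most $2$ and $\sum_\nu w_\nu=1$), so all the series converge absolutely and rearrangement is legitimate. Theorem~\ref{thm:upper2} then follows immediately: $\error_n(\xi,\rho)\le(\ln 2)D_n(\xi\Vert\rho)=(\ln 2)D^n_{\mathrm{KL}}(\xi\Vert\rho)\le(\ln2)\eps$. The first inequality is Lemma~\ref{lem:pinsker}, which applies because $\xi$ is itself a probability measure; the last uses the monotonicity of $D^n_{\mathrm{KL}}$ in $n$.
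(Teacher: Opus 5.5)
Your proposal is correct and follows essentially the same route as the paper. Both arguments do a per-history bias--variance decomposition around the posterior mean $\xi(a\mid h)=\sum_\nu w(\nu\mid h)\,\nu(a\mid h)$, multiply by $\xi(h)$ using $\xi(h)\,w(\nu\mid h)=w_\nu\nu(h)$, and sum over symbols, histories and time steps. Your explicit handling of zero-probability histories and of the interchange with the countable sum over environments fills in details the paper leaves implicit.
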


Theorem~\ref{thm:AverageCase} matches the exact prediction-error baseline of the ideal mixture $\xi$: the average cost of replacing $\xi$ with $\rho$ is $\error_n(\xi,\rho)$. Together with Theorem~\ref{lem:lower_universal}, this result shows that approximation can preserve aggregate performance while substantially degrading performance on individual tasks.

%small average excess error can coexist with large prediction errors in rare environments. It thus provides a theoretical mechanism through which 

\begin{remark}
\label{rem:averaging-insufficient}
Note that Theorem~\ref{thm:upper2} cannot be obtained simply by averaging
the bound in Theorem~\ref{thm:upper}. Instead, the proof exploits the mixture structure
to identify the average excess error exactly.
To see this, consider the case when $\M$ is infinite, the following average diverges:
\[
\mathbb{E}_{\mu\sim w}\frac{1}{w_\mu}
=
\sum_{\mu\in\M}w_\mu\frac{1}{w_\mu}
=
\sum_{\mu\in\M}1
=
+\infty.
\]
%For $\eps>0$, the averaged approximation term therefore diverges.
\end{remark}

%\begin{remark}[Other error criteria and  a fundamental differentiation]
%\label{rem:average-other-errors}
\textbf{A fundamental differentiation}: The role of approximation in Theorem~\ref{thm:upper2} also highlights an interesting distinction between exact Bayesian prediction and its approximate counterpart. The classical theory applies to cumulative squared prediction error, KL error, and Hellinger error. As we show in Section~\ref{sec:good}, the task-wise guarantees under approximation extend to all three error criteria. By contrast, the average-case decomposition in Proposition \ref{ExactAverageCaseDecomposition} holds only for cumulative squared prediction error and cumulative KL error. For KL error, the chain rule immediately yields the analogous decomposition.
$\mathbb{E}_{\mu\sim w}D_n(\mu\Vert\rho)
=
\mathbb{E}_{\mu\sim w}D_n(\mu\Vert\xi)
+
D_n(\xi\Vert\rho).$
Hence, if $D_{\mathrm{KL}}(\xi\Vert\rho)\leq\eps$, then
$\mathbb{E}_{\mu\sim w}D_n(\mu\Vert\rho)
\leq
\mathbb{E}_{\mu\sim w}D_n(\mu\Vert\xi)+\eps.$
For cumulative squared Hellinger error, neither the analogous decomposition nor a uniform additive $O(\eps)$ excess-error bound relative to $\xi$ holds in general. The mixture predictor $\xi$ need not minimize average squared Hellinger error; a counterexample is given in Appendix~\ref{app:hellinger-counterexample}.

\begin{remark} Prior work has shown that the average gains of knowledge distillation can coexist with degraded performance on underrepresented subgroups and tail classes \citep{distill1,distill2}. By viewing $\xi$ as the teacher model and $\rho$ as the student model, our results provide a complementary explanation in Bayesian sequential prediction: a small population KL distillation loss controls the student’s average excess prediction error, but can coexist with substantial degradation relative to the teacher in rare environments.
\end{remark}

\subsection{Approximation Robustness under Distribution Shift}
\label{sec:shift}

The preceding analysis evaluates performance under the training prior $w$. We now consider a different test prior $w'$. Our next theorem shows that the average-case guarantee remains robust when the test prior is dominated by the training prior, in the sense that $w_\nu' \leq Cw_\nu$ for every environment $\nu$ and some $C\geq 1$. This result has several implications beyond establishing a basic transfer principle for sequential prediction and interpolating between the worst-case and average-case analyses of robustness under approximation.
One potentially significant implication is that the theory provides basic support for the argument that augmenting real-world datasets with Universal-Turing-Machine-generated (UTM-generated) data can improve prediction performance on practically important families of tasks, as suggested by \citep{MGH+24}.

\begin{theorem}[Approximation robustness under distribution shift]
\label{thm:upper3}
Let $w'$ be a probability distribution on $\M$ satisfying
$w_\nu'\leq Cw_\nu$ for every $\nu\in\M$ and some $C\geq1$.
Define the corresponding mixture
\[
\xi':=\sum_{\nu\in\M}w_\nu'\nu.
\]
If $D_{\mathrm{KL}}(\xi\Vert\rho)\leq\eps$, then, for every
$n\geq1$,
\begin{equation}
\label{eq:shift-test-mixture}
\begin{aligned}
\mathbb{E}_{\mu\sim w'}\error_n(\mu,\rho)
&\leq
\mathbb{E}_{\mu\sim w'}\error_n(\mu,\xi')
+
(\ln 2)
\left(
\sqrt{\log C}+\sqrt{C\eps}
\right)^2\\
&\leq
\mathbb{E}_{\mu\sim w'}\error_n(\mu,\xi)
+
(\ln 2)
\left(
\sqrt{\log C}+\sqrt{C\eps}
\right)^2.
\end{aligned}
\end{equation}
In particular, the average excess error relative to $\xi'$ is $O(\log C+C\eps)$.
\end{theorem}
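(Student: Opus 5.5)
Apply Proposition~\ref{ExactAverageCaseDecomposition} with the test prior $w'$ and its mixture $\xi'$ in place of $w$ and $\xi$. That proposition uses only the mixture structure, so it holds for any prior. It gives
\[
\mathbb{E}_{\mu\sim w'}\error_n(\mu,\rho)=\mathbb{E}_{\mu\sim w'}\error_n(\mu,\xi')+\error_n(\xi',\rho).
\]
Applying the same identity with $\rho$ replaced by $\xi$ gives $\mathbb{E}_{w'}\error_n(\mu,\xi)=\mathbb{E}_{w'}\error_n(\mu,\xi')+\error_n(\xi',\xi)\geq \mathbb{E}_{w'}\error_n(\mu,\xi')$. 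This is the second inequality of \eqref{eq:shift-test-mixture}. It therefore remains to show
\[
\error_n(\xi',\rho)\leq(\ln 2)\bigl(\sqrt{\log C}+\sqrt{C\eps}\bigr)^2 .
\]

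The natural route is through Hellinger distance, since the target has the form $(\sqrt a+\sqrt b)^2$, which suggests a triangle inequality. Define the sequential Hellinger quantity $h_n(\mu,\rho)$ by
\[
h_n(\mu,\rho)^2=\Hellinger_n(\mu,\rho)=\sum_t \mathbb{E}_{x_{<t}\sim\mu}\bigl\|\sqrt{\mu(\cdot\mid x_{<t})}-\sqrt{\rho(\cdot\mid x_{<t})}\bigr\|^2 .
\]
The expectation is taken under the first argument $\mu$, so this is not directly a metric. Squared error has the same defect: $\error_n(\xi',\rho)$ is an $L^2(\xi')$ norm of the vector of differences $\xi'(\cdot\mid x_{<t})-\rho(\cdot\mid x_{<t})$ over histories. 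So I will work in one fixed Hilbert space, $L^2$ over $(t,x_{<t})$ weighted by $\xi'$. In that space, Minkowski gives
\[
\sqrt{\error_n(\xi',\rho)}\leq \sqrt{\error_n(\xi',\xi)}+\Bigl(\sum_t\mathbb{E}_{\xi'}\bigl\|\xi(\cdot\mid x_{<t})-\rho(\cdot\mid x_{<t})\bigr\|^2\Bigr)^{1/2}.
\]

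Bound the first term using Lemma~\ref{lem:pinsker} and domination. Since $w'_\nu\leq Cw_\nu$ for every $\nu$, we have $\xi'\leq C\xi$ pointwise on prefixes. Hence $\error_n(\xi',\xi)\leq(\ln2)D_n(\xi'\|\xi)$, and $D_n(\xi'\|\xi)=\mathbb{E}_{\xi'}\log(\xi'/\xi)\leq\log C$.

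The second term is the main obstacle: it is an error of $\rho$ relative to $\xi$, but measured under $\xi'$ rather than $\xi$. For each fixed $t$ the integrand depends only on $x_{<t}$. Because $\xi'(x_{<t})\leq C\xi(x_{<t})$ and the integrand is nonnegative, the change of measure costs at most a factor $C$:
\[
\sum_t\mathbb{E}_{\xi'}\|\cdot\|^2\leq C\,\error_n(\xi,\rho)\leq C(\ln 2)\,D_n(\xi\|\rho)\leq C(\ln2)\eps .
\]
This uses Lemma~\ref{lem:pinsker} again, together with $D^n_{\mathrm{KL}}\leq D_{\mathrm{KL}}$.

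Combining the two bounds gives
\[
\sqrt{\error_n(\xi',\rho)}\leq\sqrt{\ln2}\,\bigl(\sqrt{\log C}+\sqrt{C\eps}\bigr).
\]
Squaring yields the first inequality of \eqref{eq:shift-test-mixture}, and $(a+b)^2\leq2a^2+2b^2$ gives the $O(\log C+C\eps)$ form. The point to double-check is that the Minkowski step is legitimate: both differences must be taken in the same weighted space, with per-history vectors indexed by $a\in\X$. This holds because the weight is the fixed measure $\xi'$ throughout, so the triangle inequality for the $L^2(\xi')$-norm applies directly.
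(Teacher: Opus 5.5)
Your proposal is correct and follows the paper's proof essentially step for step. It uses the same decomposition from Proposition~\ref{ExactAverageCaseDecomposition} applied to $w'$, and the same two bounds: $\error_n(\xi',\xi)\leq(\ln 2)\log C$ via $\xi'\leq C\xi$, and the change-of-measure bound on the $\xi'$-weighted error of $\rho$ against $\xi$ by $C\,\error_n(\xi,\rho)\leq(\ln 2)C\eps$. The only difference is cosmetic: you apply Minkowski in $L^2(\xi')$ directly, whereas the paper expands the square and bounds the cross term by Cauchy--Schwarz, which amounts to the same inequality.
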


%\begin{remark}[Other error criteria]
%\label{rem:shift-other-errors}
Cumulative KL error satisfies a distribution-shift bound of the same order. Indeed, the KL decomposition and the domination $\xi'\leq C\xi$, together with the KL transfer inequality (Lemma \ref{lem:transfer}) used in the proof of Theorem~\ref{thm:upper}, give
\[
\mathbb{E}_{\mu\sim w'}D_n(\mu\Vert\rho)
\leq
\mathbb{E}_{\mu\sim w'}D_n(\mu\Vert\xi')
+
\frac{e}{e-1}
\left(\log C+C\eps\right).
\]

\paragraph{Implication I: Rare task families and targeted sampling.}
Theorem~\ref{thm:upper3} interpolates between average-case and
environment-wise guarantees.
Taking $w'=w$ and $C=1$ recovers the average-case upper bound
in Theorem~\ref{thm:upper2}.
At the other extreme, if $w'$ is concentrated on a single
environment $\mu$, we may take $C=1/w_\mu$.
Then $\xi'=\mu$, and
\[
\error_n(\mu,\rho)
\leq
(\ln 2)
\left(
\sqrt{\log\frac{1}{w_\mu}}
+
\sqrt{\frac{\eps}{w_\mu}}
\right)^2
=
O\!\left(
\log\frac{1}{w_\mu}+\frac{\eps}{w_\mu}
\right).
\]
Thus, the theorem also recovers the order of the
environment-wise bound in Theorem~\ref{thm:upper}.

More generally, let $A\subseteq\M$ be a family of environments
with prior mass $w(A)>0$, and consider evaluation under
$w'=w(\cdot\mid A)$.
Since $w_\nu'\leq w_\nu/w(A)$, we may take $C=1/w(A)$ and obtain
\begin{equation}
\label{eq:implicationiof14}
\mathbb{E}_{\mu\sim w'}\error_n(\mu,\rho)
\leq
\mathbb{E}_{\mu\sim w'}\error_n(\mu,\xi')
+
O\!\left(
\log\frac{1}{w(A)}
+
\frac{\eps}{w(A)}
\right).
\end{equation}
For this conditional test distribution, the guarantee depends on the total prior mass of the task family.

This suggests a way to improve coverage of an important but
underrepresented family.
Let $q$ be a probability distribution supported on that family,
and define
\[
\widetilde w:=(1-\lambda)w+\lambda q,
\qquad
\xi_q:=\sum_{\nu\in\M}q_\nu\nu,
\qquad
\widetilde\xi:=\sum_{\nu\in\M} \widetilde w_{\nu}\nu=(1-\lambda)\xi+\lambda\xi_q,
\]
where $\lambda\in(0,1)$ is the fraction of training mass
allocated to $q$.
Since $q_\nu\leq\widetilde w_\nu/\lambda$, any predictor with $D_{\mathrm{KL}}(\widetilde\xi\Vert\rho)\leq\eps$ satisfies
\[
\mathbb{E}_{\mu\sim q}\error_n(\mu,\rho)
\leq
\mathbb{E}_{\mu\sim q}\error_n(\mu,\xi_q)
+
O\!\left(
\log\frac{1}{\lambda}+\frac{\eps}{\lambda}
\right).
\]
The approximation guarantee for this family therefore depends
on its assigned training mass $\lambda$, independently of its
rarity under the original prior $w$.

Within this model, the bound provides a quantitative rationale for increasing the training probability assigned to important but underrepresented task families, for example through targeted sampling or synthetic data generation.
%Within this model, the bound provides a quantitative justification for targeted sampling and synthetic data generation.
In particular, it supports augmenting UTM-generated training data with real-world datasets to improve prediction performance on practically important task families, as suggested by \cite{MGH+24}.

\paragraph{Implication II: Fine-tuning with replay.}
Replaying pretraining data during adaptation can mitigate forgetting \citep{DBLP:conf/icml/BethuneGBGCA25,DBLP:journals/tmlr/IbrahimTGRABLR24}. Theorem \ref{thm:upper3} can be used to give a population-level analysis of this strategy. Interpret $q$ as the target-task distribution and $w$ as the original pretraining distribution.
During fine-tuning, draw a fraction $\lambda$ of sequences
from $\xi_q$ and a fraction $1-\lambda$ from $\xi$.
The resulting population training distribution is
$\widetilde\xi=(1-\lambda)\xi+\lambda\xi_q$.

Suppose the fine-tuned predictor $\rho_{\mathrm{ft}}$ approximately minimizes the population log-loss, in that
\[
D_{\mathrm{KL}}(\widetilde\xi\Vert\rho_{\mathrm{ft}})
\leq\eps.
\]
Both task distributions are dominated by the fine-tuning prior:
%\[
$q_\nu\leq\frac{\widetilde w_\nu}{\lambda}$, $w_\nu\leq\frac{\widetilde w_\nu}{1-\lambda}.$
%\]
Applying Theorem~\ref{thm:upper3} to each distribution yields
the simultaneous guarantees
\[
\begin{aligned}
\mathbb{E}_{\mu\sim q}\error_n(\mu,\rho_{\mathrm{ft}})
&\leq
\mathbb{E}_{\mu\sim q}\error_n(\mu,\xi_q)
+
O\!\left(
\log\frac{1}{\lambda}+\frac{\eps}{\lambda}
\right),\\
\mathbb{E}_{\mu\sim w}\error_n(\mu,\rho_{\mathrm{ft}})
&\leq
\mathbb{E}_{\mu\sim w}\error_n(\mu,\xi)
+
O\!\left(
\log\frac{1}{1-\lambda}+\frac{\eps}{1-\lambda}
\right).
\end{aligned}
\]
The first bound controls prediction error on the target distribution, while the second controls error on the original
distribution. Since $\xi$ minimizes average squared prediction error under $w$, the second bound also limits the possible increase in average error relative to the predictor before fine-tuning.

For fixed approximation accuracy $\eps$, increasing $\lambda$
tightens the target-distribution bound while weakening the
original-distribution bound. The mixing parameter $\lambda$ therefore controls the trade-off
between the performance guarantees on the target and original
task distributions.
Finally, we remark that these guarantees assume that the fine-tuned predictor achieves
the stated population approximation accuracy $\eta$.

\paragraph{Implication III: Choice of reference universal Turing machine.}
Let $U$ and $U'$ be two universal Turing machines. By the invariance theorem, there is a constant $c_{U,U'}\geq0$
such that
\[
\left|K_U(\nu)-K_{U'}(\nu)\right|
\leq c_{U,U'}
\qquad
\text{for every estimable $\nu$}.
\]
Consider the universal priors $w_\nu=2^{-K_U(\nu)}/Z_U$, $w_\nu'=2^{-K_{U'}(\nu)}/Z_{U'}$, where $Z_V:=\sum_{\nu\in\M}2^{-K_V(\nu)}$.
%\[
%w_\nu=\frac{2^{-K_U(\nu)}}{Z_U},\qquad w_\nu'=\frac{2^{-K_{U'}(\nu)}}{Z_{U'}}, \qquad Z_V:=\sum_{\nu\in\M}2^{-K_V(\nu)}. \]
Then we have $w_\nu'/w_\nu = 2^{K_U(\nu)-K_{U'}(\nu)}
\cdot Z_U/Z_{U'} \leq 4^{c_{U,U'}}$.
%\[
%\frac{w_\nu'}{w_\nu}
%=
%2^{K_U(\nu)-K_{U'}(\nu)}
%\frac{Z_U}{Z_{U'}}
%\leq
%4^{c_{U,U'}}.
%\]
If training uses the mixture $\xi$ induced by $U$ and
$D_{\mathrm{KL}}(\xi\Vert\rho)\leq\eps$, then test
under the prior induced by $U'$ satisfies
\[
\mathbb{E}_{\mu\sim w'}\error_n(\mu,\rho)
\leq
\mathbb{E}_{\mu\sim w'}\error_n(\mu,\xi')
+
O\!\left(
c_{U,U'}+4^{c_{U,U'}}\eps
\right).
\]
Thus, changing the reference machine preserves the form of
the average-case guarantee, with constants depending only
on the pair of machines, independently of the environment
and prediction horizon.
These constants may nevertheless be large because their
dependence on the compiler overhead is exponential.

\section{Robustness of Learning and Prediction with Finite Data}
\label{sec:intro_data}

We now examine whether our prediction guarantees remain robust when the predictor is learned from finitely many samples.
Fix a prediction horizon $n$, and let $\rho_m$ be learned from $m$ independent training sequences
$\mathcal D_m\sim(\xi_{1:n})^{\otimes m}$ by minimizing the empirical log-loss over the hypothesis class $\mathcal H$
up to an additive optimization error $\eta_{\mathrm{opt}}\geq 0$. In addition to this optimization error, the model class incurs a representation error $\eta_{\mathrm{rep}}:=\min_{\rho\in\mathcal H} \KL^n(\xi\Vert\rho)$.  We consider $\eta_{\mathrm{rep}}+\eta_{\mathrm{opt}}>0$.

We show an exponential upper-tail bound on the population KL error: for every $\gamma>0$, there exist constants
$c_\gamma,C_\gamma>0$ such that\footnote{The constants $C,c$ in \eqref{eq:kl-concentration}, \eqref{eq:s-finite} and \eqref{eq:theorem3-finite-robust} can depend on $\xi_{1:n}$ and $n$.}, for all $m\geq1$,
\begin{equation}\label{eq:kl-concentration}
\Pr_{\mathcal D_m} \left\{\KL^n(\xi\Vert\rho_m)>\eta_{\mathrm{rep}}+\eta_{\mathrm{opt}}+\gamma\right\} \leq C_\gamma e^{-c_\gamma m}.
\end{equation}
Nevertheless, this high-probability guarantee need not imply
a finite expected KL error. A learned predictor may assign
zero probability to an event that has positive population
probability but is absent from the training data. Precisely, it is possible that
\begin{equation}\label{eq:kl-infinite}
\mathbb E_{\mathcal D_m}
\KL^n(\xi\Vert\rho_m)=+\infty
\qquad
\text{for every finite }m.
\end{equation}
Despite this difficulty, we show that our squared prediction error guarantees remain
meaningful in expectation. It follows from two observations. First, if we characterize the approximation quality with a weaker condition $S_n(\xi,\rho_m)\leq \eps$ instead of $\KL^n(\xi\|\rho_m)\leq \eta$, then there exist constants $c$ and $C$ such that for any $m\geq 1$,
\begin{equation}\label{eq:s-finite}
\mathbb E_{\mathcal D_m} S_n(\xi,\rho_{m})\leq \eta_{\mathrm{rep}}+\eta_{\mathrm{opt}} +C\cdot n e^{-c\cdot m},
\end{equation}
which means that the finite-sample contribution to the approximation error decays exponentially with the training sample size. Moreover, one can check that all of our theorems %\ref{thm:upper}, \ref{lem:lower_universal}, \ref{thm:upper2}, \ref{thm:upper3} 
extend from the KL approximation condition to the weaker condition $S_n(\xi,\rho)\leq\eta$, with corresponding changes to the constants. In particular, Theorem \ref{thm:upper2} and \eqref{eq:s-finite} together imply: for any $m\geq 1$,
\begin{equation}\label{eq:theorem3-finite-robust}
\E_{\mathcal D_m}\big[\mathbb{E}_{\mu\sim w}\error_n(\mu,\rho_m)-\mathbb{E}_{\mu\sim w}\error_n(\mu,\xi)\big]\leq\eps_{\mathrm{rep}}+\eps_{\mathrm{opt}}+C\cdot n e^{-c\cdot m}.
\end{equation}

\section{Conclusion}\label{sec:conclusion}
We characterize how approximation affects Bayesian prediction and Solomonoff induction. Our results show that approximation can preserve average performance while substantially degrading performance on individual tasks. The analysis also extends to dominated shifts from the training prior to the test prior and is robust in the finite-sample setting.

An important direction for future work is to develop mitigation
strategies that provide stronger guarantees across all tasks
simultaneously. We explore initial approaches in
Appendices~\ref{sec:intro_mitigation} and~\ref{app:otherobjective}.

Another open direction is length generalization. Our finite-sample
analysis considers training and test sequences of the same length;
extending these guarantees to longer test sequences remains a
challenge. One possibility is to combine pretrained predictors with
continual learning, using minimum description length principle \citep{mdlmit,mdl1} to guide
adaptation as new observations become available.

\section*{Acknowledgments}
We thank Xinyu Mao and Guangxu Yang for early discussions, and Marcus Hutter for his comments. We used AI tools for polishing the language, searching the literature, and developing and
drafting all proofs. We have checked and revised all proofs and take full responsibility for the
whole content of this work.

\bibliography{reference}
\bibliographystyle{alpha}
\appendix

%%%%%%%%%%%%%%%%%%%%%%%%%%%%%%%%%%%%%%%%%%%%%%%%
\section{Proof of Theorem \ref{thm:upper}}
\label{Appendix:ProofTaskWiseUpperBound}

\begin{lemma}[KL transfer under domination]
\label{lem:transfer}
Let $P,Q,R$ be probability distributions on the same finite space $\Omega$.
Suppose that $Q\geq w\cdot P$ for some $w\in(0,1]$. Then
\[
D_{\mathrm{KL}}(P\Vert R)
\leq\frac{e}{e-1}
\left(\log\frac1w+\frac1w\cdot D_{\mathrm{KL}}(Q\Vert R)\right).
\]
\end{lemma}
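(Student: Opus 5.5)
The plan is to prove a pointwise inequality and then sum it over $\Omega$. The naive route loses an additive constant, so I will add a compensating term that sums to zero. The naive decomposition is $\log\frac{P}{R}=\log\frac{P}{Q}+\log\frac{Q}{R}$. Since $Q\ge wP$, this gives $D_{\mathrm{KL}}(P\Vert R)\le \log\frac1w+\E_P\log\frac{Q}{R}$. Bounding $\E_P[(\log\frac QR)^+]\le \frac1w\E_Q[(\log\frac QR)^+]$ then leaves the negative part $\sum_{Q<R}Q\log\frac RQ$. That term is bounded only by an absolute constant ($\le \log e/e$), not by $D_{\mathrm{KL}}(Q\Vert R)$, so the result would be an unwanted additive $O(1/w)$.

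To remove it, I will work in nats and use the Bregman form of KL:
\[
D_{\mathrm{KL}}(Q\Vert R)\ln 2=\sum_x\bigl(q\ln\tfrac qr-q+r\bigr),
\]
where $q=Q(x)$, $r=R(x)$, and each summand is nonnegative. The extra terms $r-q$ sum to $0$ because $Q$ and $R$ are both probability distributions. It therefore suffices to prove the following for all $p,q,r\ge0$ with $p\le q/w$, writing $p=P(x)$, $L:=\ln\frac1w\ge0$ and $c:=\frac{e}{e-1}$:
\[
p\ln\frac pr\;\le\; c\Bigl(pL+\frac1w\bigl(q\ln\tfrac qr-q+r\bigr)\Bigr).
\]
Summing over $x$ and using $\sum_x p=1$ gives the lemma after converting back to bits. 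Degenerate cases are handled by the paper's conventions. If $r=0<p$, then $q>0$ and both sides are $+\infty$. If $p=0$, the left side is $0$ and the right side is nonnegative.

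\textbf{Reducing the pointwise inequality.} The right-hand side is nondecreasing in $p$, since $L\ge0$. The function $p\mapsto p\ln\frac pr$ is negative on $(0,r)$ and increasing on $[r,\infty)$. If $p\le r$ the inequality is trivial, because the left side is $\le0$ and the right side is $\ge0$. If $p>r$, I will replace $p$ by $q/w$ on the left and by $p>r$ on the right, using $r\le p\le q/w$. By homogeneity I then divide by $r/w$ and set $x=q/r$. The $p=q/w$ instance of the claim becomes $x\ln x+xL\le c\,(xL+x\ln x-x+1)$, or equivalently
\[
(c-1)(x\ln x+xL)\;\ge\;x-1 .
\]
The bound $p>r$ keeps the term $cpL$ on the right, and $p\ln\frac pr>0$ forces the condition $x\,e^{L}>1$.

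\textbf{The main obstacle: the regime $x<1$.} If $x\ge1$ the right side $x-1$ is at most $x\ln x\le x\ln x+xL$, so any $c\ge2$ works there. The case that genuinely needs the constant $\frac{e}{e-1}$ is $x<1$ with $L>\ln\frac1x$: $q$ is below $r$ but $p=q/w$ exceeds $r$. This step does not close as written. With $c-1=\frac1{e-1}$, the target
\[
\frac{x\ln x+xL}{e-1}\ge x-1
\]
is false for $L$ just above $\ln\frac1x$ when $x<1$: the left side tends to $0^+$ while $x-1<0$. So the tight $\frac{e}{e-1}$ bound needs a different argument in this regime. I would not replace $p$ by $p\ge r$ on the right, and would instead optimize $\frac{\mathrm{LHS}-c\,pL}{\frac1w(q\ln\frac qr-q+r)}$ jointly over $p\in(r,q/w]$ and $L$. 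I would first test whether the extremal configuration, where $p\ln\frac pr$ attains its minimum $-r/e$ at $p=r/e$, produces exactly the ratio $\frac{e}{e-1}$.
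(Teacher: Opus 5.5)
\textbf{Verdict: genuine gap.} Your proposal does not prove the lemma. It stops at the step you flag as not closing. More seriously, the pointwise inequality you aim for is false for a whole range of $w$, so no finer case analysis of it can succeed. Your diagnosis of where it fails is also wrong, in three ways.

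First, substituting $p=q/w$ and $x=q/r$ gives $(c-1)(x\ln x+xL)\ge c\,(x-1)$, not $\ge x-1$; the factor $c$ in $c(1-x)$ was dropped.

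Second, the regime $x<1<xe^{L}$ that you single out is harmless. There the left side is positive and the right side negative. Even your version holds there, since $0^+\ge x-1$ when $x<1$.

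Third, the real failure is at $x>1$, the regime you considered settled. Since $c\,pL-p\ln\frac pr$ is concave in $p$, your pointwise claim is equivalent to its instances at $p=0$ (trivial) and $p=q/w$. Minimizing $(c-1)(x\ln x+xL)-c(x-1)$ at $\ln x=e-1-L$ gives the value $\frac{e-e^{\,e-1-L}}{e-1}$. This is negative exactly when $w>e^{2-e}\approx0.49$. Concretely, $p=e^{e-1}r$ with $q=wp$ violates your inequality for every such $w$.

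It gets worse as $w\to1$. For every constant $c$, the Bregman-compensated inequality fails once $w$ is close enough to $1$. At $w=1$, taking $p=q=(1+\epsilon)r$ gives $\sim\epsilon r$ on the left against $\sim c\epsilon^2r/2$ on the right. So the ratio you propose to optimize jointly over $p$ and $L$ is unbounded. The configuration $p=r/e$ is irrelevant, because the left side is negative there.

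\textbf{The missing idea} is to spend the slack globally rather than pointwise. A pointwise bound with the fixed compensator $\frac cw(r-q)$ cannot use the fact that $P$ and $Q$ both have total mass $1$. That fact is what forces $P\approx Q$ when $w$ is near $1$. The paper proceeds as follows.
\begin{itemize}
\item Set $\alpha=wP/Q\in[0,1]$, so that $\mathbb{E}_Q\alpha=w$.
\item Pass $X$ through the channel $Z\sim\mathrm{Bern}(\alpha(X))$, under both $Q$ and $R$.
\item The chain rule yields $D_{\mathrm{KL}}(Q\Vert R)\ge d_{\mathrm B}(w\Vert r)+w\,D_{\mathrm{KL}}(P\Vert R_1)$, where $r=\mathbb{E}_R\alpha$ and $R_1=\alpha R/r$.
\item Since $\alpha\le1$, also $D_{\mathrm{KL}}(P\Vert R)\le D_{\mathrm{KL}}(P\Vert R_1)-\log r$.
\end{itemize}
Everything then collapses to the single two-point inequality $-\log r\le\frac{e}{e-1}\bigl(\log\frac1w+\frac1w d_{\mathrm B}(w\Vert r)\bigr)$. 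That is where the constant $\frac{e}{e-1}$ comes from.

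\textbf{A pointwise repair.} If you want to keep a pointwise argument, the compensator must depend on $w$ and must also use $r-p$, which likewise sums to zero over $\Omega$. Work in nats and set $\lambda:=\ln(e-(e-1)w)$, $a:=\frac{e\lambda}{(e-1)w}$ and $b:=\frac{e-(e-1)w-e\lambda}{(e-1)w}$. One can then check that
\[
p\ln\frac pr\le c\,pL+\frac cw\,q\ln\frac qr+a(r-q)+b(r-p)
\]
holds for all $0\le p\le q/w$. Again it suffices to check the endpoints: at $p=0$ the bound is tight, and $p=q/w$ is the other case.
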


\begin{proof}
The claim is immediate when $w=1$, because then $Q=P$. We may therefore
assume $0<w<1$. The conclusion is also immediate if
$D_{\mathrm{KL}}(Q\Vert R)=\infty$.

Write $Q=wP+(1-w)V$ for a probability distribution $V$, and define
\[
\alpha(x):=\frac{wP(x)}{Q(x)}
\]
on the support of $Q$, with $\alpha(x)=0$ elsewhere. Then
$0\leq\alpha\leq1$ and $\E_Q\alpha=w$. Introduce an auxiliary binary
variable $Z$. Under $\widetilde Q$, let $X\sim Q$ and
\[
Z\mid X=x\sim\operatorname{Bern}(\alpha(x)).
\]
Thus $\widetilde Q(X\in\cdot\mid Z=1)=P$. Define $\widetilde R$ using the
same conditional law of $Z$ given $X$, but with $X\sim R$, and put
\[
r:=\E_R\alpha=\Pr_{\tilde{R}} [Z=1].
\]
Finiteness of $D_{\mathrm{KL}}(Q\Vert R)$ implies $0<r<1$. Let $R_1$
denote the conditional law of $X$ under $\widetilde R$ given $Z=1$, and
define $R_0$ analogously. One can check that 
\begin{align*}
R_1(x)=\frac{\alpha(x)}{r}\cdot R(x),\quad R_0(x)=\frac{1-\alpha(x)}{1-r} \cdot R(x),\quad \mbox{and } R=r R_1+ (1-r) R_0.
\end{align*}

The conditional kernel from $X$ to $Z$ is the same under the two joint
distributions. Hence
\[
D_{\mathrm{KL}}(\widetilde Q\Vert\widetilde R)
=D_{\mathrm{KL}}(Q\Vert R).
\]
Applying the chain rule in the other direction gives
\begin{align}
D_{\mathrm{KL}}(Q\Vert R)
&=d_{\mathrm B}(w\Vert r)
  +wD_{\mathrm{KL}}(P\Vert R_1)
  +(1-w)D_{\mathrm{KL}}(V\Vert R_0)\notag\\
&\geq d_{\mathrm B}(w\Vert r)
  +wD_{\mathrm{KL}}(P\Vert R_1),
\label{eq:binary-chain}
\end{align}
where
\[
d_{\mathrm B}(u\Vert v)
:=u\log\frac uv+(1-u)\log\frac{1-u}{1-v}
\]
is binary KL divergence. Moreover,
\[
\frac{R_1(x)}{R(x)}=\frac{\alpha(x)}r,
\]
and therefore
\begin{align*}
D_{\mathrm{KL}}(P\Vert R)
&=D_{\mathrm{KL}}(P\Vert R_1)+\E_P[\log\alpha]-\log r\\
&\leq D_{\mathrm{KL}}(P\Vert R_1)-\log r.
\end{align*}
Together with \eqref{eq:binary-chain}, this yields
\begin{equation}
D_{\mathrm{KL}}(P\Vert R)
\leq
\frac{D_{\mathrm{KL}}(Q\Vert R)-d_{\mathrm B}(w\Vert r)}w
-\log r.
\label{eq:pre-transfer}
\end{equation}

We next use the inequality
\begin{equation}
-\log r
\leq\frac{e}{e-1}
\left(\log\frac1w+\frac1w d_{\mathrm B}(w\Vert r)\right),
\qquad 0<w,r<1.
\label{eq:scalar-transfer}
\end{equation}
To verify it, first observe that
\[
\log\frac1w+\frac1w d_{\mathrm B}(w\Vert r)
=-\log r+\frac{1-w}{w}\log\frac{1-w}{1-r}.
\]
If $r\geq w$, the second term is nonnegative, and
\eqref{eq:scalar-transfer} follows. If $r<w$, the desired inequality is
equivalent to
\[
e(1-w)\log\frac{1-r}{1-w}\leq-w\log r.
\]
For fixed $w$, let
\[
F_w(r):=-w\log r-e(1-w)\log\frac{1-r}{1-w}.
\]
This function is strictly convex on $(0,w)$, and its stationary point is
\[
r_*:=\frac{w}{e-(e-1)w}.
\]
Writing $a:=1-w$, direct substitution gives
\[
F_w(r_*)
=G(a):=[1+(e-1)a]\log[1+(e-1)a]
 -(1-a)\log(1-a)-ea\log e.
\]
Now $G(0)=G(1)=0$ and
\[
G'(a)=(e-1)\log[1+(e-1)a]+\log(1-a),
\]
while
\[
G''(a)=\frac{1}{\ln 2}\left(\frac{(e-1)^2}{1+(e-1)a}-\frac1{1-a}\right).
\]
The numerator of $G''$ after taking a common denominator is affine and
strictly decreasing, positive at $a=0$ and negative at $a=1$. Thus $G'$
first increases and then decreases. Since $G'(0)=0$ and
$G'(a)\to-\infty$ as $a\uparrow1$, it is initially positive and then crosses
zero exactly once. Hence $G$ first increases and then decreases to
$G(1)=0$. It follows that $G(a)\geq0$, proving
\eqref{eq:scalar-transfer}.

Let $c:=e/(e-1)$. Substituting \eqref{eq:scalar-transfer} into
\eqref{eq:pre-transfer}, and using
$d_{\mathrm B}(w\Vert r)\leq D_{\mathrm{KL}}(Q\Vert R)$, gives
\begin{align*}
D_{\mathrm{KL}}(P\Vert R)
&\leq\frac{D_{\mathrm{KL}}(Q\Vert R)}w
 +(c-1)\frac{d_{\mathrm B}(w\Vert r)}w
 +c\log\frac1w\\
&\leq c\left(
\log\frac1w+\frac1wD_{\mathrm{KL}}(Q\Vert R)
\right).\qedhere
\end{align*}
\end{proof}

\begin{proof}[Proof of Theorem~\ref{thm:upper}]
Fix $n$, and apply Lemma~\ref{lem:transfer} to 
\[
P=\mu_{1:n},\qquad Q=\xi_{1:n},\qquad R=\rho_{1:n}.
\]
Since $\xi\geq w_\mu\mu$,
\[
D_{\mathrm{KL}}^n(\mu\Vert\rho)
\leq\frac{e}{e-1}
\left(
\log\frac1{w_\mu}
+\frac{D_{\mathrm{KL}}^n(\xi\Vert\rho)}{w_\mu}
\right)\leq \frac{e}{e-1}
\left(
\log\frac1{w_\mu}
+\frac{\eps}{w_\mu}
\right).
\]
Applying
Lemma~\ref{lem:pinsker} and taking the supremum over $n$ gives the conclusion.
\end{proof}

\section{Proof of Lemma \ref{lem:lower}}\label{sec:proof3}
Let $m:=|y|$ and set $N:=\left\lfloor \eps/q\right\rfloor$. Note that
\[
q=\xi(y)\geq w_\mu\mu(y)=w_\mu>0.
\]

For $0\leq \ell<N$, write
\[
h_\ell:=z_{1:m+\ell}.
\]
We construct a measure $\rho$ on $\{0,1\}^\infty$ as follows: set
\[
\rho(\cdot\mid h_\ell):=\operatorname{Bern}(1/2),
\qquad 0\leq\ell<N,
\]
and let $\rho$ agree with $\xi$ at every other history of positive
$\xi$-probability. At histories of zero $\xi$-probability, choose an
arbitrary conditional distribution. 

For any distribution $p$ on $\{0,1\}$,
\[
D_{\mathrm{KL}}(p\|\operatorname{Bern}(1/2))=1-H(p) \leq 1,
\]
where $H(p)$ is the binary entropy. Hence, for every $n$, the chain rule gives
\begin{align*}
D_{\mathrm{KL}}^n(\xi\|\rho)\leq \sum_{0\leq \ell <N}\xi(h_\ell)
D_{\mathrm{KL}}\left(\xi(\cdot\mid h_\ell)\Vert \operatorname{Bern}(1/2)
\right)\leq \sum_{0\leq \ell <N}\xi(h_\ell)\leq Nq. 
\end{align*}
Taking the supremum over $n$ yields
\[
D_{\mathrm{KL}}(\xi\Vert\rho)\leq\eps.
\]

The modified conditionals occur only after the prefix $y$. Therefore, the
first $m$ predictions along $z$ are unchanged, and
\[
S_m(\mu,\rho)=S_m(\mu,\xi).
\]
At each history $h_\ell$, the next-symbol distribution under
$\mu=\delta_z$ is a point mass, whereas
$\rho(\cdot\mid h_\ell)=\operatorname{Bern}(1/2)$. Thus
\[
\left(\mu(1\mid h_\ell)-\rho(1\mid h_\ell)\right)^2+\left(\mu(0\mid h_\ell)-\rho(0\mid h_\ell)\right)^2=\frac{1}{2}.
\]
Consequently,
\[
\error(\mu,\rho)
\geq S_{m+N}(\mu,\rho)
=S_m(\mu,\xi)+\frac{N}{2}.
\]
This finishes the proof.

\section{Proof of Theorem \ref{lem:lower_universal}}
\label{AppendixProofLowerBound}
We remark Lemma~\ref{lem:lower} already gives a nearly exponential lower bound in the universal setting. Specifically, for every $n$, since $\sum_{h\in\{0,1\}^n}\xi_U(h)=1$, there is a string $h_n\in\{0,1\}^n$ such that $\xi_U(h_n)\leq2^{-n}$. Let $\mu_n:=\delta_{h_n0^\infty}$. We have 
\[
n-O(1)\leq K(\mu_n)\leq n+O(\log n).
\]
The first inequality is by combining (i) $\xi_U(h_n)\leq 2^{-n}$ and (ii) $\xi_U(h_n)\geq w_{\mu_n}=\Theta(2^{-K(\mu_n)})$. Lemma~\ref{lem:lower} therefore yields, for all sufficiently large $n$,
\[
\sup_{\rho:D_{\mathrm{KL}}(\xi_U\Vert\rho)\leq\eps}
\error(\mu_n,\rho)=\Omega\left(
\eps\frac{2^{K(\mu_n)}}{K(\mu_n)^{O(1)}}
\right).
\]
This matches the upper bound in Theorem~\ref{thm:upper} up to a polynomial factor. We next show that this factor can be removed.

We will make use of the following coding theorem.
\begin{lemma}[Corollary 4.3.1 in \citep{li2008introduction}]\label{lem:coding}
If $P$ is a lower semi-computable semi-measure on $\{0,1\}^*$ with $\sum_{y\in \{0,1\}^*} P(y)\leq 1$, then there is a
constant $c_P=K(P)+O(1)$ such that $K(x)\leq -\log P(x) + c_P$.
\end{lemma}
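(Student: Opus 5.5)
The plan is the standard Kraft--Chaitin (request-allocation) argument. It converts a lower semi-computable semi-measure into a prefix-free code and then absorbs the code's decoder into the universal machine at an additive cost of $K(P)+O(1)$. Fix a program $p_P$ of length $K(P)$ that lower semi-computes $P$. This means it enumerates rationals $P_s(y)\uparrow P(y)$ as $s\to\infty$, with $P_s(y)$ nondecreasing in $s$ for each $y$.

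\emph{Step 1 (requests).} We dovetail the enumeration. Whenever, at some stage $s$, there is a string $y$ and an integer $k\geq 0$ such that $P_s(y)>2^{-k}$ and no request of the form $(k+1,y)$ has yet been issued, we issue the request $(k+1,y)$. This asks for a codeword of length $k+1$ for $y$. For a fixed $y$, the requests issued are exactly those with $2^{-k}<P(y)$. Their total weight is therefore
\[
\sum_{k:\,2^{-k}<P(y)}2^{-(k+1)}\leq 2^{-(k_y+1)}\cdot 2\leq P(y),
\]
where $k_y$ is the least such $k$. Summing over $y$ gives total requested weight $\sum_y P(y)\leq 1$. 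Moreover, for every $y$ with $P(y)>0$, some request of length $\lceil-\log P(y)\rceil+1\leq -\log P(y)+2$ is eventually issued, namely the one with $k=k_y$.

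\emph{Step 2 (online Kraft--Chaitin allocation).} This is the main obstacle. Requests arrive online and must be served irrevocably by prefix-free codewords, using only the fact that the total weight is $\leq 1$. I would maintain the invariant that the unused part of $[0,1)$ is a finite union of disjoint binary intervals of pairwise distinct lengths. Initially this union is $[0,1)$ itself. On a request of length $l$, there are two cases. If a free interval of length exactly $2^{-l}$ exists, we assign it. Otherwise we take the shortest free interval of length $>2^{-l}$, say of length $2^{-j}$ with $j<l$, and split it into halves repeatedly. We assign one interval of length $2^{-l}$ and return the leftover intervals, of lengths $2^{-l},2^{-(l-1)},\dots,2^{-(j+1)}$, to the free pool. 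The distinct-lengths invariant is preserved, because no free interval had length strictly between $2^{-l}$ and $2^{-j}$. A suitable free interval must exist: if every free interval were shorter than $2^{-l}$, then by distinctness the free mass would be $<2^{-l}$, contradicting that the total weight assigned so far plus the current request is $\leq 1$. The codeword of the request is the binary string naming its dyadic interval, and distinct assigned intervals are disjoint, so the code is prefix-free.

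\emph{Step 3 (machine and constant).} Define a prefix machine $M$. On an input $q$, it runs the procedure above from $p_P$ and outputs $y$ if $q$ is ever assigned to a request $(l,y)$. This process is uniform in $p_P$. Hence the universal machine, given a self-delimiting encoding of $p_P$ (costing $K(P)+O(1)$ bits) followed by $q$, simulates $M$. Therefore $K(y)\leq |q|+K(P)+O(1)\leq -\log P(y)+2+K(P)+O(1)$. This gives $K(x)\leq-\log P(x)+c_P$ with $c_P=K(P)+O(1)$. For $P(x)=0$ the bound is trivial.
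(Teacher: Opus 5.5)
The paper gives no proof of this lemma; it imports it from Li--Vit\'anyi. There it follows from the Coding Theorem $K(x)=-\log\mathbf{m}(x)+O(1)$ for the universal lower semicomputable discrete semimeasure $\mathbf{m}$, together with the domination $\mathbf{m}\geq 2^{-K(P)-O(1)}P$. Your proof is correct and takes a more direct route. You run the code construction behind the hard direction of the Coding Theorem on $P$ itself, and you pay for $P$ through the universal simulation (the prefix $p_P$) rather than through a mixture weight. This gives a self-contained argument that needs neither the enumeration of all lower semicomputable semimeasures nor the construction of $\mathbf{m}$, and it makes the additive constant explicit ($K(P)+2$ plus simulation overhead). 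The cited route is modular: once the Coding Theorem is available, it gives the statement for all $P$ at once. Your online Kraft--Chaitin allocation with the distinct-lengths invariant is correct, but it is more than you need, so Step 2 is not really the main obstacle. A simpler scheme also yields a prefix-free code: lay the requested intervals of lengths $2^{-l_1},2^{-l_2},\dots$ consecutively in $[0,1)$ and take, inside each, a dyadic subinterval of length $2^{-(l_i+1)}$, which always exists. It costs one extra bit, which the $O(1)$ absorbs.

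A few details to tidy.
\begin{itemize}
\item The request with $k=k_y$ has length $k_y+1=\lfloor-\log P(y)\rfloor+2$. This exceeds your stated $\lceil-\log P(y)\rceil+1$ by one when $P(y)$ is a power of two. The bound $\leq-\log P(y)+2$ that you actually use still holds.
\item At any stage $s$, a string $y$ with $P_s(y)>0$ has infinitely many qualifying $k$. So the dovetailing should be explicit, for instance considering only $|y|\leq s$ and $k\leq s$ at stage $s$, to make the request sequence computable.
\item Your claim that a self-delimiting encoding of $p_P$ costs $K(P)+O(1)$ bits presupposes that $K(P)$ is a prefix complexity of a description of $P$, e.g.\ $\min\{K(j):P_j=P\}$ over an effective enumeration $(P_j)$ of lower semicomputable semimeasures. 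Then $p_P$ is its own self-delimiting code; a generic self-delimiting encoding of a plain program would cost an extra $O(\log K(P))$ bits.
\item For the universal simulation, the machine must have prefix-free domain for every parameter, not just $p_P$. This holds because the allocation only ever hands out disjoint dyadic intervals. For a parameter whose requests overflow the unit interval, the procedure should simply stall.
\end{itemize}
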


\begin{proof}[Proof of Theorem \ref{lem:lower_universal}]
For $m\geq1$, define
\[
\mathcal C_m
:=
\left\{
1^m0s:s\in\{0,1\}^m
\right\},
\qquad
\mathcal C:=\bigcup_{m\geq1}\mathcal C_m.
\]
Note that the set $\mathcal C$ is prefix-free. Since $|\mathcal C_m|=2^m$, a counting argument gives a string
$y_m\in\mathcal C_m$ satisfying $K(y_m)\geq m$. Here 
\[
K(y):=\min\{|p|:U(p)=y\}
\]
is the prefix Kolmogorov complexity of the string $y$.  Denote $\mu_m:=\delta_{y_m0^{\infty}}$ the deterministic measure outputting $y_m0^\infty$. Note that $\mu_m$ is estimable, as it can hard-code the self-delimiting program generating $y_m$ and then output zeros forever. In addition, we have $K(\mu_m)\leq K(y_m)+O(1)$.

Let $M_U$ be the Solomonoff prior
\[
M_U(y):=\sum_{p:U(p)=y*}2^{-|p|},
\]
which is semicomputable. Since $\mathcal C$ is prefix-free, $\sum_{y\in C} M_U(y)\leq 1$. By letting $P(y)=M_U(y)\cdot \vec{1}_{y\in C}$, Lemma \ref{lem:coding} therefore gives
\[
M_U(y)=O\left(2^{-K(y)}\right)
\qquad
\text{for every }y\in\mathcal C,
\]
Furthermore, a standard fact in algorithmic information theory (see e.g. Section 1.2.5 in \citep{uai}) says that $\xi_U(y)=O (M_U(y))$. Thus,
\[
\xi_U(y)=O\left(2^{-K(y)}\right)
\qquad
\text{for every }y\in\mathcal C,
\]
In particular, 
\[
\xi_U(y_m)=O\left(2^{-K(y_m)}\right)=O\left(2^{-K(\mu_m)}\right).
\]

Fix $\eps>0$. Applying Lemma~\ref{lem:lower} with
$\mu=\mu_m$ and $y=y_m$ gives:  there exists a universal constant $C_U>0$, such that
\begin{align*}
\sup_{\rho:\,D_{\mathrm{KL}}(\xi_U\Vert\rho)\leq\eps}
\error(\mu_m,\rho)\geq
\frac12
\left\lfloor
\frac{\eps2^{K(\mu_m)}}{C_U}
\right\rfloor\geq \frac{1}{4C_U}
\left(
K(\mu_m)+\eps2^{K(\mu_m)}
\right),
\end{align*}
for infinitely many $m$.
\end{proof}

\section{Proof of Proposition \ref{ExactAverageCaseDecomposition}}\label{sec:proof2}
For a history $h$ with $\xi(h)>0$, let $w(\mu\mid h):=w_\mu\mu(h)/\xi(h)$ denote the posterior weight of $\mu$. 
%For a symbol $a\in\X$, denote
%\[
%p_\mu(a\mid h):=\mu(a\mid h),\qquad
%p^a_\xi(h):=\xi(a\mid h),\qquad
%p^a_\rho(h):=\rho(a\mid h).
%\]
For any symbol $a\in \X$,
\[
\xi(a\mid h)=\sum_{\mu\in\M}w(\mu\mid h)\mu(a\mid h).
\]
%Since the alphabet is binary,
%\[
%d_{\mathrm{TV}}^2\bigl(\mu(\cdot\mid h),\rho(\cdot\mid h)\bigr)=\bigl(p_\mu(h)-p_\rho(h)\bigr)^2.
%\]
The variance decomposition gives %(\textbf{missing a factor of 2?})
\begin{align*}
\sum_{\mu\in\mathcal M}w(\mu\mid h)\bigl(\mu(a\mid h)-\rho(a\mid h)\bigr)^2
=\sum_{\mu\in\mathcal M}w(\mu\mid h)\bigl(\mu(a\mid h)-\xi(a\mid h)\bigr)^2+\bigl(\xi(a\mid h)-\rho(a\mid h)\bigr)^2.
\end{align*}
%\textbf{Did we return to the general alphabet case? do we really need to first go with the binary case. I think we should either prove directly for general alphabet.}
Multiplying by $\xi(h)$ and using
$\xi(h)w(\mu\mid h)=w_\mu\mu(h)$ yields
\begin{align*}
\sum_{\mu\in\mathcal M}w_\mu\mu(h)\bigl(\mu(a\mid h)-\rho(a\mid h)\bigr)^2
=\sum_{\mu\in\mathcal M}w_\mu\mu(h)\bigl(\mu(a\mid h)-\xi(a\mid h)\bigr)^2
+\xi(h)\bigl(\xi(a\mid h)-\rho(a\mid h)\bigr)^2.
\end{align*}
Summing over all histories $h\in\X^{t-1}$, all symbols $a\in\X$, and all
$1\leq t\leq n$ gives
\[
\mathbb E_{\mu\sim w} \error_n(\mu,\rho)=\mathbb E_{\mu\sim w} \error_n(\mu,\xi)+\error_n(\xi,\rho).
\]
%where the last inequality is due to Lemma \ref{lem:pinsker}.

\section{A counterexample for squared Hellinger error}\label{app:hellinger-counterexample}

Let $\mu_1=\delta_{1^\infty}$ and $\mu_0=\delta_{0^\infty}$, with prior weights $w_1=3/4$ and $w_0=1/4$, respectively.
Write $q_r=r\mu_1+(1-r)\mu_0$, so that $\xi=q_{3/4}$. After the first symbol $x_1$, every $q_r$ with $r\in(0,1)$ predicts the remaining sequence perfectly. Thus its averaged cumulative squared Hellinger error is
\[
R(r):=\mathbb{E}_{\mu\sim w} \Hellinger(\mu,q_r)=2-2\bigl[w_1\sqrt r+w_0\sqrt{1-r}\bigr].
\]
This is minimized at
\[
r_*=\frac{w_1^2}{w_1^2+w_0^2}=\frac9{10}\neq \frac{3}{4}.
\]
Hence $\xi$ is not a minimizer, ruling out the analogous
decomposition into its average error and a nonnegative
approximation term.

Moreover, let $\rho_u=q_{3/4-u}$ for $u>0$ sufficiently small. Since $\rho_u$ and $\xi$ differ only in their first-step
predictions,
\[
\eps_u:=D_{\mathrm{KL}}(\xi\Vert\rho_u)
=w_1\log\frac{w_1}{w_1-u}
 +w_0\log\frac{w_0}{w_0+u}
=\Theta(u^2).
\]
However,
\[
R(3/4-u)-R(3/4)
=\frac{\sqrt3-1}{2}\,u+O(u^2)
=\Theta(\sqrt{\eps_u}).
\]
Thus, no finite constant $C$ can make
$R(3/4-u)\le R(3/4)+C\eps_u$ hold for all sufficiently small $u$.

\section{Proof of Theorem \ref{thm:upper3}}\label{sec:upper3}
By Proposition~\ref{ExactAverageCaseDecomposition}, we have
\begin{equation}\label{eq:shift-variance}
\mathbb E_{\mu\sim w'}\error_n(\mu,\rho)
=
\mathbb E_{\mu\sim w'}\error_n(\mu,\xi')
+\error_n(\xi',\rho),
\end{equation}
and
\[
\mathbb E_{\mu\sim w'}\error_n(\mu,\xi)
=
\mathbb E_{\mu\sim w'}\error_n(\mu,\xi')+\error_n(\xi',\xi)
\geq
\mathbb E_{\mu\sim w'}\error_n(\mu,\xi').
\]
In the following, we will show that 
\begin{equation}\label{eq:xiprimerho}
S_n(\xi',\rho)\leq (\ln 2)\left(
\sqrt{\log C}+\sqrt{C\eps}
\right)^2,
\end{equation}
which concludes the proof.

The assumption $w_\nu'\leq Cw_\nu$ implies that, for every finite
history $h$,
\[
\xi'(h)
=
\sum_{\nu\in\M}w_\nu'\nu(h)
\leq
C\sum_{\nu\in\M}w_\nu\nu(h)
=
C\xi(h).
\]
In particular, $D_{\mathrm{KL}}^n(\xi'\Vert\xi)\leq\log C$. Hence, by Lemma \ref{lem:pinsker}, we have
\begin{equation}
\label{eq:shift-xiprime-xi}
\error_n(\xi',\xi)
\leq
(\ln2)\cdot D_{\mathrm{KL}}^n(\xi'\Vert\xi)
\leq (\ln2)\cdot \log C.
\end{equation}

Next, define
\[
B_n
:=
\sum_{t=1}^n
\sum_{h\in\X^{t-1}}
\xi'(h)\sum_{a\in\X} \left(\xi(a\mid h)-\rho(a\mid h)\right)^2.
\]
Using $\xi'(h)\leq C\xi(h)$ and Lemma \ref{lem:pinsker}, we obtain
\begin{equation}\label{eq:B}
B_n \leq C\sum_{t=1}^n
\sum_{h\in\X^{t-1}}
\xi(h)
\sum_{a\in\X} \left(\xi(a\mid h)-\rho(a\mid h)\right)^2=C\cdot \error_n(\xi,\rho)\leq (C\ln 2)\cdot D_{\mathrm{KL}}^n(\xi\Vert\rho)
\leq (\ln 2)\cdot C\eps.
\end{equation}

Expanding the square, we obtain
\[
\error_n(\xi',\rho)
=
\error_n(\xi',\xi)+B_n+2T_n,
\]
where
\[
T_n
:=
\sum_{t=1}^n
\sum_{h\in\X^{t-1}}
\xi'(h)
\sum_{a\in\X}
\bigl(\xi'(a\mid h)-\xi(a\mid h)\bigr)
\bigl(\xi(a\mid h)-\rho(a\mid h)\bigr).
\]
By the Cauchy--Schwarz inequality,
\[
|T_n|
\leq
\sqrt{\error_n(\xi',\xi)B_n}.
\]
It follows that
\[
\begin{split}
\error_n(\xi',\rho)
&\leq
\error_n(\xi',\xi)+B_n
+2\sqrt{\error_n(\xi',\xi)B_n}\\
&=
\left(
\sqrt{\error_n(\xi',\xi)}
+
\sqrt{B_n}
\right)^2.
\end{split}
\]
Together with \eqref{eq:shift-xiprime-xi} and \eqref{eq:B}, this yields \eqref{eq:xiprimerho}.

\section{Proofs for Section \ref{sec:intro_data}}\label{app:finite-data}
Fix a horizon $n$. Write $P:=\xi_{1:n}$, $H:=H(P)$, and $a:=\eta_{\mathrm{rep}}+\eta_{\mathrm{opt}}$.
Let $\rho^*\in \mathcal H$ denote the predictor achieving $\KL^n(\xi\| \rho^*)=\eta_{\mathrm{rep}}$.
Write $\widehat R_m$ for the empirical log-loss, so that $\widehat R_m(\rho_m)\leq\widehat R_m(\rho^*)+\eta_{\mathrm{opt}}$.

\paragraph{Proof of \eqref{eq:kl-concentration}.}
Let $\widehat P_m$ be the empirical distribution of the training
sequences. For $0<\tau<1$, define
\[
E_{m,\tau}:= \left\{\exists x\in\operatorname{supp}(P): |\widehat P_m(x)-P(x)|>\tau P(x) \right\}.
\]
Write $d:=|\operatorname{supp}(P)|$ and
\[
p_{\min}:=\min_{x\in\operatorname{supp}(P)}P(x)>0,
\]
Hoeffding's inequality and a union bound give
\[
\Pr[E_{m,\tau}]\leq 2d\exp(-2m\tau^2p_{\min}^2).
\]
Conditioned on $\overline{E}_{m,\tau}$, the nonnegativity of log-loss and approximate
empirical optimality imply
\[
(1-\tau)\bigl[H+\KL^n(\xi\Vert\rho_m)\bigr]
\leq\widehat R_m(\rho_m)
\leq(1+\tau)(H+\eta_{\mathrm{rep}})+\eta_{\mathrm{opt}}.
\]
Thus,
\[
\KL^n(\xi\Vert\rho_m)
\leq a+
\frac{\tau}{1-\tau}
\bigl[2(H+\eta_{\mathrm{rep}})+\eta_{\mathrm{opt}}\bigr].
\]
For any $\gamma>0$, choose $\tau>0$ small enough that the second term is at most $\gamma$.
The claim follows with $C_\gamma=2d$ and $c_\gamma=2\tau^2p_{\min}^2$.

\paragraph{Example establishing \eqref{eq:kl-infinite}.}
Take $n=1$, $P=\mathrm{Bernoulli}(1/2)$, and $\mathcal H$ consisting of $\rho_0:=\delta_0$ and $\rho_1:=\mathrm{Bernoulli}(1/4)$. %Then $\eta_{\mathrm{rep}}=\log(4/3)/2>0$.
On the event that all $m$ training observations are zero,
which has probability $2^{-m}$, exact empirical log-loss
minimization selects $\rho_m=\rho_0$.
Since $\KL(P\| \rho_0)=+\infty$, the expected KL error is
infinite for every finite $m$. Padding deterministic zeros extends the example to any fixed $n$.

\paragraph{Proof of \eqref{eq:s-finite}.}
Recall $a=\eta_{\mathrm{rep}}+\eta_{\mathrm{opt}}$, let $\gamma_a:=(1/\ln2-1)a$. 
By $0\leq S_n\leq2n$ and $S_n(\xi,\rho_m)\leq(\ln2)\KL^n(\xi\|\rho_m)$, we have
\begin{align*}
\mathbb E_{\mathcal D_m}
\bigl[(S_n(\xi,\rho_m)-a)_+\bigr]
\leq2n\Pr\{\KL^n(\xi\Vert\rho_m)>a+\gamma_a\}
\leq2nC_{\gamma_a}e^{-c_{\gamma_a}m}.
\end{align*}
%Hence $\sum_{m\geq1}b_m<\infty$ and$\mathbb E_{\mathcal D_m}S_n(\xi,\rho_m)\leq a+b_m$. The condition $a>0$ cannot generally be dropped: for a fair Bernoulli source, the empirical-frequency predictor is an exact log-loss minimizer with $a=0$, but $\mathbb E S_1(P,\rho_m)=1/(2m)$, which is not summable.

\paragraph{Theorems~\ref{thm:upper2}, \ref{thm:upper3},
and~\ref{thm:upper} under squared-error approximation.}
Suppose $S_n(\xi,\rho)\leq\eta$. By Proposition \ref{ExactAverageCaseDecomposition}, we have
\[
\mathbb E_{\mu\sim w}S_n(\mu,\rho)=\mathbb E_{\mu\sim w}S_n(\mu,\xi)+S_n(\xi,\rho)\leq \mathbb E_{\mu\sim w}S_n(\mu,\xi)+\eta.
\]
In the proof of Theorem \ref{thm:upper3}, by almost the same argument except adjusting \eqref{eq:B} to $B_n\leq C\eta$, we will get
\[
\mathbb E_{\mu\sim w'}S_n(\mu,\rho)
\leq\mathbb E_{\mu\sim w'}S_n(\mu,\xi')
+\left(\sqrt{(\ln2)\log C}+\sqrt{C\eta}\right)^2.
\]
Letting $w'$ concentrate on a single environment $\mu$, we recover Theorem~\ref{thm:upper}, up to constants:
\[
S_n(\mu,\rho)
\leq\left(
\sqrt{(\ln2)\log\frac1{w_\mu}}
+\sqrt{\frac{\eta}{w_\mu}}
\right)^2
=O\left(\log\frac1{w_\mu}+\frac{\eta}{w_\mu}\right).
\]

\paragraph{Theorem~\ref{lem:lower_universal} under squared-error approximation.}
By Lemma~\ref{lem:pinsker},
\[
\{\rho:\KL(\xi_U\Vert\rho)\leq\eta/\ln2\}
\subseteq
\{\rho:S(\xi_U,\rho)\leq\eta\}.
\]
Thus Theorem~\ref{lem:lower_universal}, applied with budget
$\eta/\ln2$, implies the same infinite-horizon lower bound
under squared-error approximation, up to constants.

\section{Mitigation With Category Information}\label{sec:intro_mitigation}

The worst-case bounds of order $\Theta(\eta/w_\mu)$ in Theorem~\ref{thm:upper} arise when approximation is controlled only
through a single aggregate KL bound $\KL(\xi\|\rho)\leq \eps$. When available, training-time task category information enables separate control of approximation errors within each task category, potentially improving category-wise prediction
performance. Notably, the learned predictor does not require category tags at test time.

Let $\mathcal T=\{T_1,\ldots,T_g\}$ be a finite set of categories, with $\operatorname{tag}:\M\to\mathcal T$ defining the partition $\M_j:=\{\mu\in\M:\operatorname{tag}(\mu)=T_j\}$.
Write
\[
p_j:=\sum_{\mu\in\M_j}w_\mu,
\qquad
\xi_j:=\sum_{\mu\in\M_j}\frac{w_\mu}{p_j}\mu.
\]
Thus $\xi=\sum_{j=1}^g p_j\xi_j$. Recall that, by Theorem~\ref{thm:upper}, an aggregate approximation
guarantee $\KL(\xi\Vert\rho)\leq\eta$ gives
\[
\error_n(\xi_j,\rho)=O\left(\log\frac1{p_j}+\frac{\eta}{p_j}\right).
\]
Direct control of the category-wise KL error avoids this
inverse-category-mass amplification.  The following guarantee follows immediately from
Lemma~\ref{lem:pinsker} and Proposition~\ref{ExactAverageCaseDecomposition}.

\begin{proposition}[Category-wise approximation]
\label{prop:category-approximation}
If $\KL^n(\xi_j\Vert\rho)\leq\eta_j$, then
\begin{align*}
\mathbb E_{\mu\sim w(\cdot\mid j)}\error_n(\mu,\rho)
-\mathbb E_{\mu\sim w(\cdot\mid j)}\error_n(\mu,\xi_j)=\error_n(\xi_j,\rho)
\leq(\ln2)\eta_j.
\end{align*}
\end{proposition}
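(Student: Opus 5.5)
The plan is to apply Proposition~\ref{ExactAverageCaseDecomposition} inside the category, and then bound the single remaining term by Lemma~\ref{lem:pinsker}.

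\emph{Step 1: the decomposition inside category $j$.} Proposition~\ref{ExactAverageCaseDecomposition} holds for an arbitrary countable class, arbitrary positive weights summing to one, the corresponding mixture, and an arbitrary predictor. I will invoke it with the class $\M_j$, the weights $w(\mu\mid j)=w_\mu/p_j$, and the mixture $\xi_j$. These are legitimate inputs: the weights are positive on $\M_j$ and sum to one by the definition of $p_j$, and $\xi_j$ is exactly the mixture they induce.

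With these choices, the proposition gives, for every predictor $\rho$ and every $n$,
\[
\mathbb E_{\mu\sim w(\cdot\mid j)}\error_n(\mu,\rho)=\mathbb E_{\mu\sim w(\cdot\mid j)}\error_n(\mu,\xi_j)+\error_n(\xi_j,\rho).
\]
The posterior-weight identity in its proof goes through unchanged. For any history $h$ with $\xi_j(h)>0$ we have $\xi_j(a\mid h)=\sum_{\mu\in\M_j}w(\mu\mid j,h)\,\mu(a\mid h)$. Histories with $\xi_j(h)=0$ contribute zero to both sides, since every $\mu\in\M_j$ then has $\mu(h)=0$. This step establishes the equality in the statement.

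\emph{Step 2: bounding the approximation term.} I will apply Lemma~\ref{lem:pinsker} to the pair $(\xi_j,\rho)$, which gives
\[
\error_n(\xi_j,\rho)\leq(\ln 2)\,D_n(\xi_j\Vert\rho)=(\ln2)\,\KL^n(\xi_j\Vert\rho)\leq(\ln2)\,\eta_j .
\]

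There is no real obstacle here. The only point needing care is the edge case in Step 1, namely confirming that the exact decomposition does not depend on the weights coming from the global prior $w$. The one delicate detail is the treatment of histories of zero $\xi_j$-probability, which is handled as described above.
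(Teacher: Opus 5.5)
Your proposal is correct and matches the paper, which says the result "follows immediately" from Proposition~\ref{ExactAverageCaseDecomposition}, applied to the class $\M_j$ with conditional weights $w_\mu/p_j$ and mixture $\xi_j$, together with the bound $\error_n(\xi_j,\rho)\leq(\ln 2)\KL^n(\xi_j\Vert\rho)\leq(\ln 2)\eta_j$ from Lemma~\ref{lem:pinsker}. Your treatment of histories with $\xi_j(h)=0$ is a harmless detail that the paper leaves implicit.
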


In particular, if the tag identifies the generating task and
$\KL^n(\xi_\mu\Vert\rho)\leq\eta_\mu$, then
\[
\error_n(\mu,\rho)
=O\left(\eta_\mu\right).
\]

To control the category-wise approximation errors
$\KL^n(\xi_j\Vert\rho)$, one may minimize the following
worst-category excess log-loss, or a suitable surrogate:
\[
\min_\rho\max_{1\leq j\leq g}
\KL^n(\xi_j\Vert\rho).
\]

%Category tags also allow the proportions of training samples drawn
%from different categories to be adjusted, thereby trading off
%approximation errors across categories. Specifically, let
%$q=(q_1,\ldots,q_g)$ be another probability distribution over
%categories, with $\sum_jq_j=1$. Replacing the training category
%distribution $p$ with $q$ corresponds to minimizing
%\[
%\mathcal L_{q,n}(\rho)
%:=
%\sum_{j=1}^g q_j\,
%\mathbb E_{x_{1:n}\sim\xi_j}[-\log\rho(x_{1:n})]
%=
%\mathbb E_{x_{1:n}\sim\xi_q}[-\log\rho(x_{1:n})],
%\]
%where $\xi_q:=\sum_jq_j\xi_j$.
%Its excess above the entropy baseline $H(\xi_{q,1:n})$ equals
%$\KL^n(\xi_q\Vert\rho)$. An aggregate excess of at most $\eta$
%implies $\KL^n(\xi_j\Vert\rho)\leq\eta/q_j$.

%The distinction between exact and approximate learning is important. With exact learning, $q_\mu=\mu$, so $\rho_{\mathrm{tag}}=\xi$: tags provide no improvement over an already exact Bayesian mixture. Their potential benefit arises under approximation, by enabling training objectives or constraints that protect individual tasks instead of controlling only an average error.

\section{Higher moments of sequence log-loss} \label{app:otherobjective}
One may instead minimize the following sample-only training objective:
\[
\mathcal L_{p}(\rho):=\E_{x\sim\xi}\left[\bigl(-\log \rho(x)\bigr)^p\right],\quad p\geq 2.
\]
Fix a horizon $n$. Let $q$ be the minimizer of $\mathcal L_{p}$ among all probability distributions on $\X^n$.
Suppose the predictor $\rho$ satisfies
\[
\mathcal L_{p}(\rho_{1:n}) \leq \mathcal L_{p}(q)+\eps.
\]
\begin{lemma}\label{lem:otherobjective} For every $\mu\in\M$,
\[
\error_n(\mu,\rho)=O_{p,|\X|}\left(\log \frac{1}{w_\mu}+\log(n+1)+\left(\frac{\eps}{w_\mu}\right)^{1/p}\right).
\]
Here, the implicit constant in $O_{p,|\X|}(\cdot)$ may depend on $p$ and $|\X|$, but not on any other parameters.
\end{lemma}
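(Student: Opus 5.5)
The plan is to split the cumulative KL error of $\rho$ along $\mu$ into a part controlled by the structure of the exact minimizer $q$ and a part controlled by the excess $\eps$. Fix $n$ and work on $\X^n$. For every string $x$ with $\mu(x)>0$ (so $\xi(x)>0$),
\[
D_n(\mu\Vert\rho)=D^n_{\mathrm{KL}}(\mu\Vert q)+\E_{x\sim\mu}\left[\log\frac{q(x)}{\rho(x)}\right].
\]
By Lemma~\ref{lem:pinsker}, $\error_n(\mu,\rho)\le(\ln 2)D_n(\mu\Vert\rho)$, so it suffices to show two bounds:
\[
D^n_{\mathrm{KL}}(\mu\Vert q)=O_{p,|\X|}\bigl(\log(1/w_\mu)+\log(n+1)\bigr),
\qquad
\E_\mu[\log(q/\rho)]\le(\eps/w_\mu)^{1/p}.
\]

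\emph{Step 1: structure of $q$.} The map $r\mapsto(-\log r)^p$ is convex and decreasing on $(0,1]$, so $\mathcal L_p$ is convex. A minimizer $q$ exists, and on $\operatorname{supp}(\xi_{1:n})$ it satisfies the KKT condition
\[
\frac{\xi(x)\,(-\log q(x))^{p-1}}{q(x)}=\lambda \qquad\text{for a constant }\lambda>0,
\]
that is, $q(x)=\xi(x)(-\log q(x))^{p-1}/\lambda$. Two bounds follow.
\begin{itemize}
\item Upper bound on $\lambda$. Summing the condition gives $\lambda=\E_\xi[(-\log q)^{p-1}]$. Comparing with the uniform distribution, $\mathcal L_p(q)\le(n\log|\X|)^p$, and then H\"older gives $\lambda\le(n\log|\X|)^{p-1}$.
\item Lower bound on $(-\log q(x))^{p-1}$. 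For a string with $q(x)\ge 1/2$ the factor can be small; such strings are handled separately, since $-\log q\le 1$ there. Otherwise $(-\log q(x))^{p-1}\ge 1$.
\end{itemize}
Together these give $-\log q(x)\le-\log\xi(x)+(p-1)\log(n\log|\X|)+O(1)$ pointwise. Combining with $\xi(x)\ge w_\mu\mu(x)$,
\[
D^n_{\mathrm{KL}}(\mu\Vert q)\le\log\frac1{w_\mu}+(p-1)\log(n\log|\X|)+O(1),
\]
which is the $\log(1/w_\mu)+\log(n+1)$ term.

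\emph{Step 2: excess-loss control.} Write $a=-\log\rho(x)$ and $b=-\log q(x)$, both nonnegative. I would prove the scalar inequality, valid for $p\ge2$ and $a,b\ge0$,
\[
a^p\ge b^p+p\,b^{p-1}(a-b)+(a-b)_+^p .
\]
For $a\le b$ it is ordinary convexity. For $a>b$ it reduces to $(b+d)^p\ge b^p+pb^{p-1}d+d^p$, which follows from the binomial expansion or superadditivity of $t\mapsto t^p-\ldots$ when $p\ge2$. Averaging under $\xi$ gives
\[
\eps\ge\mathcal L_p(\rho)-\mathcal L_p(q)\ge p\,\E_\xi\bigl[b^{p-1}\log(q/\rho)\bigr]+\E_\xi\bigl[(\log(q/\rho))_+^p\bigr].
\]
The middle term is nonnegative. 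Indeed, $\log(q/\rho)\ge(1-\rho/q)/\ln2$, and by KKT, $\E_\xi[b^{p-1}(1-\rho/q)]=\lambda\sum_x(q(x)-\rho(x))\ge0$. Hence $\E_\xi[(\log(q/\rho))_+^p]\le\eps$.

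Domination $\xi\ge w_\mu\mu$ then gives $\E_\mu[(\log(q/\rho))_+^p]\le\eps/w_\mu$. Jensen's inequality yields
\[
\E_\mu[\log(q/\rho)]\le\E_\mu[(\log(q/\rho))_+]\le(\eps/w_\mu)^{1/p}.
\]
Adding Steps 1 and 2 and applying Lemma~\ref{lem:pinsker} gives the lemma.

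\emph{Main obstacle.} Step 1 is where I expect the difficulty: rigorously justifying the KKT stationarity and handling the boundary. If $\xi(x)>0$ then $q(x)>0$, since otherwise the loss is infinite, but strings with $q(x)$ near $1$ make $(-\log q)^{p-1}$ vanish. If the pointwise bound fails for such strings, I would instead bound $D^n_{\mathrm{KL}}(\mu\Vert q)$ directly by comparing $q$ with the competitor $(1-\theta)q+\theta\xi$ and using optimality. The scalar inequality in Step 2 also needs care, but it is routine for $p\ge2$.
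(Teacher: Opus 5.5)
Your proposal is correct and follows the paper's proof step for step. The shared steps are the stationarity condition $\xi(x)\ell(x)^{p-1}=c_nq(x)$, the bound $c_n\le(n\log|\X|)^{p-1}$ via Jensen and a uniform competitor, the two-case domination $q\ge\xi/(2(n\log|\X|)^{p-1})$, a power-remainder inequality, and transfer to $\mu$ via $\xi\ge w_\mu\mu$. Your two departures are harmless: the one-sided remainder $(a-b)_+^p$ in place of the paper's $|v-u|^p$, and the $\ln y\ge 1-1/y$ argument for the cross term in place of identifying it as $c_n\KL(q\Vert\rho_{1:n})\ge0$. The boundary issue you flag is settled as in the paper, so your fallback is unnecessary. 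The minimizer $q$ puts all its mass on $\operatorname{supp}(\xi_{1:n})$. If that support has at least two points, every $q(x)\in(0,1)$, so the Lagrange condition holds with $\lambda>0$. The single-point case is trivial.
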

\begin{proof}
Let $\Omega_n:=\{x\in\X^n:\xi(x)>0\}$ be the support of $\xi_{1:n}$. Since $\xi\geq w_\mu\mu$, the distribution
$\mu_{1:n}$ is also supported on $\Omega_n$.
%Write
%\[q(x):=\rho^\star(x),\qquadr(x):=\rho_{1:n}(x),\qquad \ell(x):=-\log q(x). \]
The minimizer $q$ assigns all its mass to $\Omega_n$, since otherwise, renormalizing its restriction to $\Omega_n$ would strictly decrease the objective. If $|\Omega_n|=1$, then the result follows immediately from
\[
\bigl(-\log \rho(x)\bigr)^p\leq\eps
\]
on the unique point $x\in\Omega_n$. We henceforth assume
$|\Omega_n|\geq2$. In this case, $q(x)\in(0,1)$ for every $x\in\Omega_n$.

Write $\ell(x):=-\log q(x)$. Consider the Lagrangian
\[
\mathscr L(q,\lambda)
:=
\sum_{x\in\Omega_n}
\xi(x)\bigl(-\log_2q(x)\bigr)^p
+
\lambda\left(\sum_{x\in\Omega_n}q(x)-1\right).
\]
Since the minimizer is an interior point, the first-order condition gives
\[
\frac{\partial \mathscr L}{\partial q(x)}=-\frac{p\,\xi(x)\ell(x)^{p-1}}
       {q(x)\ln 2}
+\lambda
=0,
\qquad x\in\Omega_n.
\]
Thus, setting $c_n:=\lambda\ln 2/p$, we obtain
\begin{equation}\label{eq:lp-optimality}
\xi(x)\ell(x)^{p-1}=c_nq(x),
\qquad x\in\Omega_n.
\end{equation}
Since $\xi(x)>0$, $\ell(x)>0$, and $q(x)>0$, we also have $c_n>0$. Summing \eqref{eq:lp-optimality} over $x$ gives
\[
c_n
=
\E_{x\sim\xi}\left[\ell(x)^{p-1}\right].
\]
Since $u\mapsto u^{(p-1)/p}$ is concave, Jensen's inequality gives
\begin{align*}
c_n&\leq\left(\E_{x\sim\xi}\left[\ell(x)^p\right]\right)^{(p-1)/p}=\mathcal L_p(q)^{(p-1)/p}\leq
\mathcal L_p\bigl(\operatorname{Unif}(\Omega_n)\bigr)^{(p-1)/p}\leq
\bigl(n\log|\X|\bigr)^{p-1}.
\end{align*}

We next show that $q$ dominates $\xi$. If $\ell(x)\geq1$, then
\eqref{eq:lp-optimality} gives
\[
\frac{q(x)}{\xi(x)}
=
\frac{\ell(x)^{p-1}}{c_n}
\geq
\frac{1}{\bigl(n\log|\X|\bigr)^{p-1}}.
\]
If $\ell(x)<1$, then $q(x)>1/2$, and hence
\[
\frac{q(x)}{\xi(x)}
\geq
q(x)
>
\frac12,
\]
The two cases imply
\begin{equation}
\label{eq:lp-domination}
q(x)
\geq
\frac{\xi(x)}{C_{n,p}},
\qquad
C_{n,p}:=2\bigl(n\log|\X|\bigr)^{p-1}.
\end{equation}

We now use the approximate optimality of $\rho$. Define $z(x):=\log(q(x)/\rho(x))$.
Since $\mathcal{L}_p(\rho_{1:n})<\infty$, it has that $\rho(x)>0$ on $\Omega_n$, and thus
\[
-\log \rho(x)=\ell(x)+z(x)\geq 0.
\]

For $p\geq2$ and $u,v\geq0$, we have
\begin{equation}
\label{eq:power-remainder}
v^p
\geq
u^p+pu^{p-1}(v-u)+|v-u|^p.
\end{equation}
Indeed, when $v=u+s$ with $s\geq0$, subtracting the right-hand side
from the left-hand side gives a function $F_+(s)$ satisfying
$F_+(0)=0$ and
\[
F_+'(s)
=
p\left(
(u+s)^{p-1}-u^{p-1}-s^{p-1}
\right)
\geq0.
\]
When $v=u-s$ with $0\leq s\leq u$, the corresponding function satisfies
$F_-(0)=0$ and
\[
F_-'(s)
=
p\left(
u^{p-1}-(u-s)^{p-1}-s^{p-1}
\right)
\geq0.
\]

Applying \eqref{eq:power-remainder} with
\[
u=\ell(x),
\qquad
v=-\log \rho(x)=\ell(x)+z(x),
\]
we obtain
\begin{align*}
\mathcal L_p(\rho)-\mathcal L_p(q)
=
\E_{x\sim\xi}
\left[
(\ell(x)+z(x))^p-\ell(x)^p
\right] \geq
\E_{x\sim\xi}\left[|z(x)|^p\right]
+
p\E_{x\sim\xi}
\left[
\ell(x)^{p-1}z(x)
\right].
\end{align*}
By \eqref{eq:lp-optimality},
\begin{align*}
\E_{x\sim\xi}
\left[
\ell(x)^{p-1}z(x)
\right]
=
c_n\sum_{x\in\Omega_n}
q(x)\log\frac{q(x)}{\rho(x)} =
c_nD_{\mathrm{KL}}(q\Vert \rho_{1:n})
\geq0.
\end{align*}
It follows from the approximation guarantee that
\begin{equation}
\label{eq:lp-ratio-moment}
\E_{x\sim\xi}
\left[
\left|
\log\frac{q(x)}{\rho(x)}
\right|^p
\right]
\leq
\eps.
\end{equation}

Since $\xi(x)\geq w_\mu\mu(x)$, 
\eqref{eq:lp-domination} and \eqref{eq:lp-ratio-moment} imply
\[
q(x)
\geq
\frac{w_\mu}{C_{n,p}}\mu(x)
\]
and
\[
\E_{x\sim\mu}\left[|z(x)|^p\right]
\leq
\frac{\eps}{w_\mu}.
\]
Therefore, by Holder's inequality,
\begin{align*}
D_{\mathrm{KL}}^n(\mu\Vert\rho)&=\E_{x\sim\mu}
\left[
\log\frac{\mu(x)}{q(x)}+z(x)
\right] \leq
\log\frac{C_{n,p}}{w_\mu}
+
\E_{x\sim\mu}|z(x)| \leq
\log\frac{C_{n,p}}{w_\mu}
+
\left(\frac{\eps}{w_\mu}\right)^{1/p}\\
&=O_{p,|\X|}\left(\log \frac{1}{w_\mu}+\log(n+1)+\left(\frac{\eps}{w_\mu}\right)^{1/p}\right).
\end{align*}
By applying Lemma~\ref{lem:pinsker}, we obtain the lemma.
\end{proof}

\end{document}